\documentclass{article}

\usepackage{microtype}
\usepackage{graphicx}
\usepackage{subcaption}
\usepackage{booktabs} 

\usepackage{hyperref}

\usepackage[accepted]{icml2026}

\usepackage{amsmath}
\usepackage{amssymb}
\usepackage{mathtools}
\usepackage{amsthm}

\usepackage[dvipsnames]{xcolor}

\usepackage{dsfont}  
\usepackage{upgreek} 
\usepackage{mathtools} 
\usepackage{multirow}
 \usepackage{array}

\usepackage{amsthm}

\usepackage{soul}

\newtheorem*{theorem*}{Theorem}
\newtheorem*{lemma*}{Lemma}
\usepackage{color, colortbl}
\definecolor{LightCyan}{rgb}{0.88,1,1}
\definecolor{Gray}{gray}{0.9}
\definecolor{cgreen}{rgb}{0.38, 0.85, 0.21}

\usepackage{tabularx}
\usepackage{algorithm}
\usepackage{algorithmic}
\usepackage{bm}
\usepackage{paralist}

\usepackage{booktabs}

\newcommand\numberthis{\addtocounter{equation}{1}\tag{\theequation}}

\usepackage{wrapfig}

\usepackage[capitalize,noabbrev]{cleveref}

\theoremstyle{plain}
\newtheorem{theorem}{Theorem}[section]

\newtheorem{lemma}[theorem]{Lemma}

\theoremstyle{definition}

\theoremstyle{remark}

\usepackage[textsize=tiny]{todonotes}

\icmltitlerunning{Fine-Tuning of Transformer models with Frames}

\begin{document}

\twocolumn[
  \icmltitle{Fine-Tuning of Transformer models with Frames}



  \icmlsetsymbol{equal}{*}

  \begin{icmlauthorlist}
    \icmlauthor{Harshavardhan Adepu}{uwm}
    \icmlauthor{Li Zhang}{google}
    \icmlauthor{Sanjiv Kumar}{google}
    \icmlauthor{Vikas Singh}{uwm}
  \end{icmlauthorlist}

  \icmlaffiliation{uwm}{University of Wisconsin-Madison}
  \icmlaffiliation{google}{Google DeepMind}

  \icmlcorrespondingauthor{Harshavardhan Adepu}{adepu@wisc.edu}

  \icmlkeywords{Machine Learning, ICML}

  \vskip 0.3in
]



\printAffiliationsAndNotice{}  

\setcounter{MaxMatrixCols}{20}
\begin{abstract}
Parameter-Efficient Fine-Tuning (PEFT) strategies such as Low-Rank Adaptation (LoRA) are effective solutions for fine-tuning large-scale pre-trained models; however,
their memory requirements scale with the size of the model, $\mathcal{O}(dr)$, where $d$ is the model's hidden dimension and $r$ is the rank.
Our proposal, FrameFT, 
models the parameter update $\Delta W$ with a sparse coefficient matrix in a Fusion Frame basis. 
Fusion Frames can be generated {\em algorithmically} and shared across model layers, enabling very efficient updates. 
Only the sparse coefficients of the basis expansion are stored/optimized, reducing the memory footprint. 
The sparse structure of the coefficient matrix in FrameFT {\em and} the sparsity in the Fusion Frames give large compute benefits, and 
our analysis provides formal convergence results.
We evaluate the idea across a suite of supervised fine-tuning benchmarks, focusing on language tasks, but also report application to vision models. Our experiments show that FrameFT achieves performance on par with/exceeding state-of-the-art PEFT techniques, but needs far fewer trainable parameters. 
\end{abstract}

\section{Introduction}
\label{sec:intro}

Foundation models show impressive capabilities across a range of domains, including language \citep{touvron2023llama, geminiteam2024geminifamilyhighlycapable, openai2024gpt4technicalreport}, vision \citep{dehghani2023scalingvisiontransformers22}, and other modalities \citep{basetts, fang2025surveycircuitfoundationmodel}. While increasing model size gives improved performance, many downstream applications benefit from an additional fine-tuning step to specialize the model for specific tasks. However, fine-tuning all parameters of a large model is expensive. This becomes worse when fine-tuning is required for multiple tasks, each needing to store its own dedicated copy of the model.
To address these difficulties, Parameter-Efficient Fine-Tuning (PEFT) methods provide an alternative \citep{hu2022lora, zhang2025parameterefficientfinetuningfoundationmodels, dettmers2023qlora}. These techniques seek to minimize the number of trainable parameters during fine-tuning, significantly reducing resource needs. Recent work has shown that PEFT methods can match the performance of full-model fine-tuning while updating only a small subset of the parameters \cite{bensaïd2025singloralowrankadaptation, zhang2026s2ftparameterefficientfinetuningsparse}.

\vspace{0.5em}
\paragraph{LoRA and Sparse Fine-tuning.}
The most widely used PEFT technique is Low-Rank Adaptation (LoRA) \citep{hu2022lora}, which keeps the pre-trained weights frozen and injects task-specific low-rank matrices into each layer. These matrices provide the flexibility to adapt the model while maintaining low memory/compute costs. LoRA’s success has inspired a range of variants that offer improvements in convergence, efficiency, and storage overhead \citep{dettmers2023qlora, hayou2024lora, zhu2024asymmetry, yaras2024compressible, xia2024chain}. One line of work uses the Singular Value Decomposition (SVD) of pre-trained weights to construct fixed bases for adaptation \citep{NEURIPS2024_48c368f1_svft, bałazy2025loraxslowrankadaptationextremely}. However, this needs dense singular vectors, which increases the memory footprint and affects inference throughput. This overhead is problematic in multi-tenant serving scenarios where there is a single pretrained model and its many fine-tuned versions.

In contrast, sparse fine-tuning (SFT) methods use a different strategy: they selectively update a sparse subset of the model’s original parameters. The selection rule varies: some approaches rely on magnitude-based pruning \citep{lu2024spp}, while others use sensitivity measures such as Fisher information \citep{guo2021parameterefficienttransferlearningdiff}, Fourier-domain analyses \citep{fourierft}, or parameter change measures \citep{ansell-etal-2022-composable}. SFT methods can face scalability issues when applied to large models, especially in identifying sparse patterns and tuning hyperparameters \citep{guo2021parameterefficienttransferlearningdiff}. Some proposals also rely on access to training dynamics \citep{ansell-etal-2022-composable} that may not be easily available.

\paragraph{Motivation.}
LoRA's dense singular vectors and asymmetric $A/B$ structure were inherited from matrix approximation methods and 
shown to work well for fine-tuning. SFT methods similarly inherited ideas from model pruning, i.e., identify task-relevant parameters through gradient or magnitude-based proxies. 
We instead take a use case driven 
approach and ask: how should we parameterize weight updates
if we want expressivity, parameter efficiency, stable optimization, and compute scalability simultaneously for fine-tuning (in particular)?
{\em Frame Theory} provides a natural answer.
{ Briefly, Finite Frames and Fusion Frames are used to study the spanning sets and spanning subspaces of a given finite-dimensional vector space, respectively.} They have many applications in coding theory, compressed sensing \citep{ffcsGK2009RH}, quantization \citep{adepu2024framequant, czaja2024framequantizationneuralnetworks}, and dictionary learning \citep{Hwang_2019, CAI201489}. Their ability to encode updates in overlapping subspaces offers robustness, flexibility, and efficiency -- properties that are relevant for model adaptation.
FrameFT partitions the model’s parameter space into fusion-frame-based subspaces, within which updates are learned in a highly sparse and structured manner.

\paragraph{Our proposal: Fine-tuning with Frames.}
We propose a new PEFT strategy inspired by Fusion Frames \citep{OleChristen5009, casazza2008fusion, shWDFFTF}. Fusion Frames allow structured representations by decomposing a space into overlapping subspaces at multiple scales. Unlike strategies that rely on either low-rank or sparse updates, FrameFT performs fine-tuning across multiple structured subspaces, naturally combining the global adaptability of LoRA with the precision of sparse updates.
\paragraph{Contributions.}
Our main contributions are:
\begin{inparaenum}[\bfseries (a)]
\item A new fine-tuning framework, \textbf{FrameFT}, that utilizes highly structured subspaces (Fusion Frames) to capture parameter updates efficiently;
\item Extensive empirical evaluations across multiple foundation models and tasks, demonstrating FrameFT’s effectiveness compared to state-of-the-art PEFT baselines;
\item A technical analysis of the numerical stability, convergence, and efficiency benefits offered by FrameFT.

\paragraph{Conflict of Interest Disclosure.} Two of the authors (LZ, SK) work at Google DeepMind, where the Gemma models were developed. The other two authors 
(HA, VS) previously worked at Google DeepMind.  

\end{inparaenum}

\section{Frames and Subspace decompositions}
\label{sec:fusion frames intro}

We provide a brief review of Finite Frame theory for Hilbert spaces {\color{blue} \citep{shWDFFTF, OleChristen5009} }. Readers familiar with these concepts can skip to the next section, where we describe how we apply this idea to fine-tuning.

\paragraph{Spanning sets and Frames.} A spanning set is a collection of vectors that spans a finite-dimensional Hilbert space or vector space, and orthonormal bases are a canonical example. Frames generalize this concept by allowing redundancy, enabling stable and often more robust representations. 

\begin{figure}
\centering
\includegraphics[width=\linewidth]{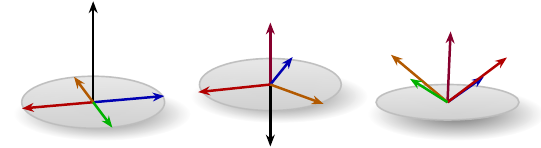}
\vspace{-10pt}
\caption{Examples of tight frames with $k = 5$ vectors in $\mathbb{R}^3$.}
\vspace{-10pt}
\label{fig:2dframes}
\end{figure}

Let $\mathcal{H}^d$ denote a $d$-dimensional Hilbert space. A sequence of vectors $\textcolor{BlueViolet}{\upphi} = (\textcolor{BlueViolet}{\varphi}_i)_{i=1}^{k}$ in $\mathcal{H}^d$ is called a frame if there exist constants $0 < A \leq B < \infty$ such that for all $x \in \mathcal{H}^d$,
\begin{align}
A||x||^2 \leq \sum_{i=1}^{k} |\langle x, \textcolor{BlueViolet}{\varphi}_i \rangle|^2 \leq B || x ||^2,
\end{align}
where $\langle \cdot, \cdot \rangle$ is the inner product in $\mathcal{H}^d$, and $A$ and $B$ are known as the frame bounds.
Typically, $k \geq d$ allows the frame to represent vectors with redundancy. An example is illustrated in
Figure~\ref{fig:2dframes} with $k=5$ vectors in $\mathbb{R}^3$. {We can project any vector in $\mathbb{R}^3$ onto these frame vectors without distorting the signal \citep{FFTaAPGCGKB2012}.}
Throughout the paper, we operate in real-valued Euclidean spaces $\mathbb{R}^d$ instead of a more general Hilbert space for simplicity. 

Fusion frames extend the above idea to settings where subspace decomposition is also desired (instead of redundancy alone). Given a collection of subspaces $\{\textcolor{blue}{\mathcal{W}}_i\}_{i=1}^k$ in $\mathbb{R}^d$ and a corresponding set of positive weights $\{w_i\}_{i=1}^k$, the collection ${(\textcolor{blue}{\mathcal{W}}_i, w_i)}_{i=1}^k$ forms a fusion frame if there exist constants $0 < A \leq B < \infty$ such that for all $x \in \mathbb{R}^d$,
\begin{align*}
A||x||^2 \leq \sum_{i=1}^{k} w_i^2|| \textcolor{BlueViolet}{P}_i x||^2 \leq B || x ||^2,
\end{align*}
where $\textcolor{BlueViolet}{P}_i$ denotes the orthogonal projection onto the subspace $\textcolor{BlueViolet}{\mathcal{W}}_i$. The weights $w_i$ adjust the influence of each subspace, and $A$, $B$ are fusion frame bounds.
A fusion frame is tight if $A = B$, and Parseval if $A = B = 1$. We will focus on Parseval fusion frames. If all weights $w_i$ are equal to $1$, we write the fusion frame simply as $\{\textcolor{blue}{\mathcal{W}}_i\}_{i=1}^k$. Frame bounds 
will help later in our convergence analysis. 

\subsection{Operators in Fusion Frames}
\label{sec:Operators in Fusion Frames}

Let ${((\textcolor{blue}{\mathcal{W}}_i, w_i))}_{i=1}^k$ be a Fusion Frame for $\mathbb{R}^d$, with orthogonal projection matrices $(\textcolor{BlueViolet}{P}_i)_{i=1}^k$ respectively. We define three operators:

The analysis operator $\mathcal{T}_{\textcolor{blue}{\mathcal{W}}}: \mathbb{R}^d \to \bigoplus_{i=1}^k \textcolor{blue}{\mathcal{W}}_i$ maps a vector to its projections across all subspaces:
\begin{equation}
\mathcal{T}_{\textcolor{blue}{\mathcal{W}}}(x) = (w_i \textcolor{BlueViolet}{P}_i^T x)_{i=1}^k.
\end{equation}
The synthesis operator $\mathcal{T}^{*}_{\textcolor{blue}{\mathcal{W}}}: \bigoplus_{i=1}^k \textcolor{blue}{\mathcal{W}}_i \to \mathbb{R}^d$ ``re-builds'' a vector from its fusion frame representation:
\begin{equation}
\mathcal{T}^{*}_{\textcolor{blue}{\mathcal{W}}}((\textcolor{Green}{y}_i)_{i=1}^k) = \sum_{i=1}^k w_i \textcolor{BlueViolet}{P}_i \textcolor{Green}{y}_i.
\end{equation}
The fusion frame operator is the composition of the two operators above, $\mathcal{S}_{\textcolor{blue}{\mathcal{W}}} = \mathcal{T}^{*}_{\textcolor{blue}{\mathcal{W}}} \mathcal{T}_{\textcolor{blue}{\mathcal{W}}}$, defined by:
\begin{equation}
\mathcal{S}_{\textcolor{blue}{\mathcal{W}}}(x) = \sum_{i=1}^k w_i^2 \textcolor{BlueViolet}{P}_i \textcolor{BlueViolet}{P}_i^T x.
\end{equation}
This operator is self-adjoint, positive semi-definite, and bounded. For Parseval fusion frames, we have $\mathcal{S}_{\textcolor{blue}{\mathcal{W}}} = I_d$ (allow exact reconstruction). These operators help in mapping vectors in $\mathbb{R}^d$ to their fusion frame representations and back.

%
%

\section{Fine-tuning in Fusion Frame Subspaces}
\label{sec:Structured Subspace Fine-tuning}

We will start by analyzing the subspace updates in LoRA \citep{hu2022lora}.  The key idea is representing weight updates through a low-rank decomposition. For a layer $l$ with pre-trained parameters $W_l \in \mathbb{R}^{m \times n}$, LoRA decomposes the weight update $\delta W_l$ as a product of two rank-deficient matrices:
\begin{align}
W_l' = W_l + \delta W_l = W_l + B_l A_l
\end{align}
where $B_l \in \mathbb{R}^{m \times r}$, $A_l \in \mathbb{R}^{r \times n}$, and 
$r < \min(m,n)$.

To better understand the structure of these updates, we can check the Singular Value Decomposition (SVD) of  $\delta W_l$:
\begin{align}
\delta W_l = U\Sigma V^T = U_r\Sigma_rV_r^T
\end{align}
where $U \in \mathbb{R}^{m \times m}$ and $V \in \mathbb{R}^{n \times n}$ are best viewed as orthogonal matrices spanning the {\em output} and {\em input} spaces respectively, and $\Sigma \in \mathbb{R}^{m \times n}$ is a rectangular diagonal matrix. Since the rank of $\delta W_l$ is constrained to $r$, we can simplify this product to $U_r\Sigma_rV_r^T$, where $\Sigma_r$ contains only the top $r$ singular values and $U_r$, $V_r$ are the corresponding left and right singular vectors.
This decomposition provides insight into LoRA updates: $V_r$ defines a subspace in the \textit{input space} $\mathbb{R}^n$ where the input is first projected. Then, $\Sigma_r$ scales these projections, and $U_r$ maps them to a subspace of the \textit{output space} $\mathbb{R}^m$. These input and output subspaces emerge \textit{implicitly} as a by-product of training the matrices $B$ and $A$, without direct control over their properties or interactions.

\paragraph{Asymmetricity.} Recent results indicate that the contribution of matrices $A$ and $B$ is \textit{asymmetric}, which can lead to instability \citep{hayou2024lora}, or failure to converge to the optimal solution \citep{malinovsky2024randomizedasymmetricchainlora}.
This suggests two key opportunities for improvement. 
\begin{inparaenum}[\bfseries (i)]
 \item Can we explicitly choose and work with multiple subspaces in both input space $\mathbb{R}^n$ and output space $\mathbb{R}^m$ to better capture parameter updates?
 \item Can we maintain precise control over how these subspaces interact during training?
\end{inparaenum}

{While existing work has explored defining subspaces using Orthogonal bases derived from the pretrained weights \citep{NEURIPS2024_48c368f1_svft, sun2024svfitparameterefficientfinetuninglarge} or fixed spectral bases such as the Fourier basis \citep{fourierft}, there is a significant cost in storage and impact on inference throughput due to dense operations. We will see how Fusion frames address {\bf both questions} directly. Specifically, we can decompose the input and output spaces into potentially overlapping subspaces with controlled relationships. Further, we can achieve good storage efficiency and inference throughput due to the inherent sparsity of the generated frames for free.
}

\subsection{Fusion Frames for Finetuning  Weight Updates}

For the first problem, we want to decompose the input and output spaces into multiple subspaces. The Fusion Frame operators in \S\ref{sec:Operators in Fusion Frames} can be used. In particular, let $(\textcolor{BlueViolet}{P}^T_{n,i})_{i=1}^k$ and $(\textcolor{BlueViolet}{P}^T_{m,i})_{i=1}^k$ be the orthogonal projection matrices for decomposing the input ($\mathbb{R}^n$) and output ($\mathbb{R}^m$) spaces into $k$ subspaces with dimensions $\rho_n$ and $\rho_m$, respectively. Let,
\begin{align}
\begin{split}
\textcolor{BlueViolet}{P}_{n} = \left[\textcolor{BlueViolet}{P}_{n,1}, \textcolor{BlueViolet}{P}_{n,2}, \dots, \textcolor{BlueViolet}{P}_{n,k}\right] \\
\textcolor{BlueViolet}{P}_m = \left[\textcolor{BlueViolet}{P}_{m,1}, \textcolor{BlueViolet}{P}_{m,2}, \dots, \textcolor{BlueViolet}{P}_{m,k}\right]
\end{split}
\end{align}
We can model rich interactions between these subspaces by using trainable coefficient matrices, in contrast to the diagonal matrix $\Sigma_r$ in LoRA.
To be specific, we define $(C_{l,i})_{i=1}^k$ where each $C_{l,i} \in \mathbb{R}^{\rho_m \times \rho_{n}}$ encodes the relationships within the $i$-th input and output subspaces. These matrices can be neatly organized into a block diagonal structure:
$C_l = \text{diag}(C_{l,1}, C_{l,2}, \dots, C_{l,k})$
which yields our simple parameter update equation:
$$W_l' = W_l + \delta W_l = W_l + \frac{\alpha}{\sqrt{mn}}\textcolor{BlueViolet}{P}_m C_l \textcolor{BlueViolet}{P}_{n}^T$$
Expanding this expression with the subspaces, we get:
$$W_l' = W_l + \frac{\alpha}{\sqrt{mn}}\sum_{i=1}^k \textcolor{BlueViolet}{P}_{m,i} C_{l,i} \textcolor{BlueViolet}{P}_{n,i}^T$$
Here, $\alpha$ is a scaling hyperparameter, normalized by $\sqrt{mn}$ (for dimension independence).
%
Fig. \ref{fig:FrameFT} (\textit{left}) shows the difference between SVD in LoRA and FrameFT. 
The projection matrices $\textcolor{BlueViolet}{P}_{n}$ and $\textcolor{BlueViolet}{P}_m$ are obtained from Tight Fusion Frames (TFF) discussed in \S \ref{sec:tff_construction}.
Under certain conditions, these TFFs exhibit Equichordal and Equi-Isoclinic properties \citep{fickus2023harmonic}, maximizing subspace {\em separation} for efficient representations. 
These TFFs remain {\bf fixed} and {\bf only the coefficient matrices} $(C_{l,i})_{i=1}^k$ are trainable.
Fig. \ref{fig:FrameFT} (\textit{right}) shows FrameFT update for a single layer (also see Alg. \ref{alg:FrameFT}).

\begin{algorithm}[tb!]
{\footnotesize
\begin{algorithmic}
\REQUIRE Frozen Parameters $W_l$, Fusion Frame number of subspaces $k$, subspace dimensions $\rho_m, \rho_n$, number of non-zero coefficients $n_c$, non-zero coefficients $G$, scalar $\alpha$
\STATE 1: $m$,$n$ = shape($W_l$) \hfill // get the shape of the parameter matrix
\STATE 2: $\textcolor{BlueViolet}{P}_{m}$, $\textcolor{BlueViolet}{P}_n$ = TFF($k,\rho_m, m$), TFF($k,\rho_n,n$) \hfill // generate the TFFs
\STATE 3: Coeffs = Init(m, n, size=$n_c$) \hfill // initialize the coefficients matrix
\STATE 4: $C_l$ = rearrange(Coeffs, $m$, $n$, $G$) \hfill // re-arrange the coefficients in dense format
\STATE 5: $\delta W_l = \frac{\alpha}{\sqrt{mn}} \textcolor{BlueViolet}{P}_m C_l \textcolor{BlueViolet}{P}_{n}^T$ \hfill //compute the update term
\STATE 6: $W_l^{'} = W_l + \delta W_l$ \hfill // update the parameter matrix
\end{algorithmic}
}
\caption{FrameFT: Finetuning with Frames}
\label{alg:FrameFT}
\end{algorithm}

\begin{figure*}[tb!]
    \centering
    \includegraphics[width=0.95\linewidth]{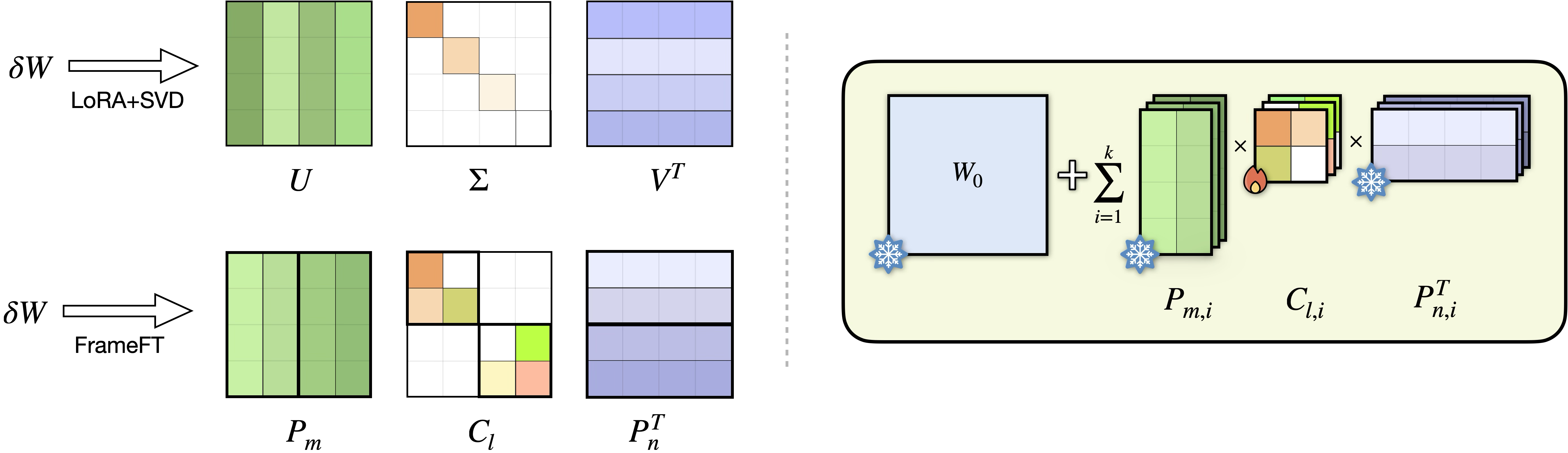}
    \caption{\footnotesize ({\em left}) How parameter update $\delta W$ is modeled by LoRA and by FrameFT. 
    ({\em right}) The parameter update rule in FrameFT. We freeze the Fusion Frame projection matrices and train only the coefficients $C_l$.
    }
    \label{fig:FrameFT} 
    \vspace{-10pt}
\end{figure*}

{
\subsection{Constructing Tight Fusion Frames}
\label{sec:tff_construction}

We use uniform tight fusion frames, where all subspaces have the {\em same} dimension. In practice, one can easily use different subspace dimensions across layers.
To construct a $(k, \rho, d)$ uniform tight fusion frame  where $k$ is the number of subspaces, $\rho$ is the dimension of each subspace, and $d$ is the dimension of the entire space, we use the Spectral Tetris algorithm described in \cite{CASAZZA2011175}, which is summarized as:
\begin{inparaenum}[(1)]
	\item Construct a unit norm tight frame (UNTF) in $\mathbb{C}^\rho$ with $d$ vectors.
	\item Modulate these vectors using the $k^{\text{th}}$ roots of unity to form $k$ subspaces of dimension $\rho$ in $\mathbb{C}^d$.
	\item Use the method in \cite{FICKUS20231} to extend the result to real-valued spaces.
\end{inparaenum}
This process ensures that the resultant set satisfies the tight fusion frame conditions. An example is included in Appendix \S\ref{sec: example of fusion frames}.
}

\subsection{Structured Coefficient Matrix}
\label{sec:structured coefficient matrix}
Prior work on Sparse Fine-Tuning methods \citep{ansell-etal-2022-composable} suggests that we can boost parameter efficiency by introducing sparsity into the coefficient matrices $(C_{l,i})_{i=1}^k$. This sparsification nicely complements our fusion frame architecture: while fusion frames provide {\em structured subspace decompositions of the parameter space}, sparse coefficients identify the {\em most useful subspace interactions} (see off-diagonal terms in Fig. \ref{fig:FrameFT}). Note that the Fusion Frames themselves are sparse, which is {\em distinct} from the sparsity in the coefficient matrix. Both are exploited in our implementation. Our experiments indicate that a simple random sparsity pattern performs well: we randomly designate a small subset of entries in $C_l$ as non-zero, distributing them uniformly across the block-diagonal matrices $(C_{l,i})_{i=1}^k$.
Figure \ref{fig:frame_coeffs_example} shows this sparse structure with an example from RoBERTa-Large.
\begin{figure}
    \centering
    \vspace{-2pt}
    \includegraphics[width=\linewidth]{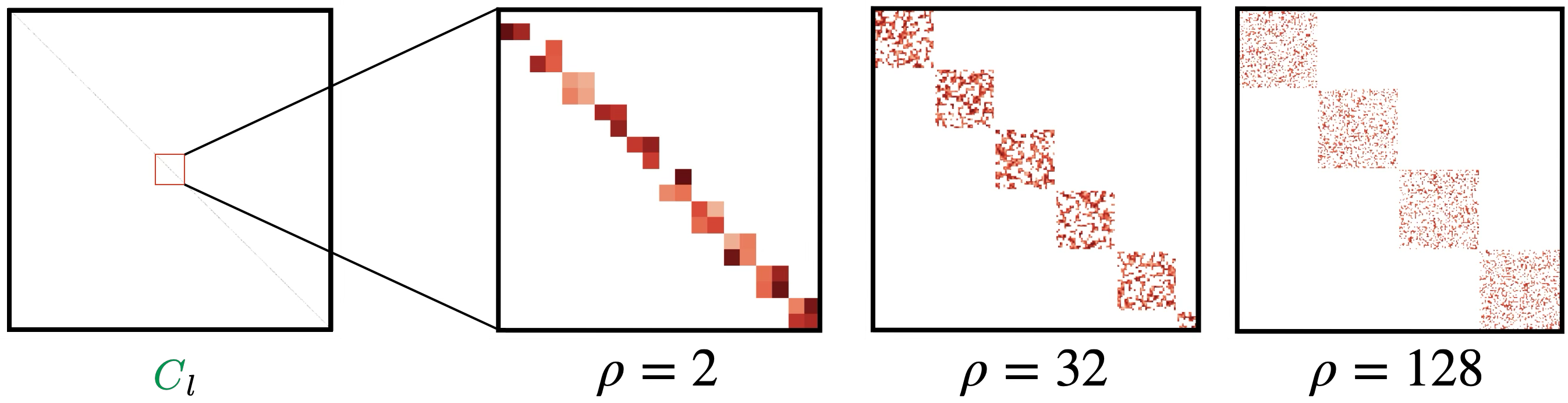}
    \vspace{-12pt}
    \caption{\footnotesize An example of the coefficient matrix $C_l$ from a Query layer (RoBERTa-L) with different subspace dimensions $\rho_m = \rho_{n} = \rho$.}
    \vspace{-20pt}
    \label{fig:frame_coeffs_example}
\end{figure}
The memory requirements for storing the updates reduce from $O(k\rho_{m} \rho_{n})$ to $O(s)$, where $s \ll k\rho_m\rho_{n}$.
To reduce memory overhead even further, we share these sparsity patterns across layers, i.e., $\text{supp}(C_l) = \text{supp}(C_{l'})$ for layers $l$ and $l'$, where $\text{supp}(\cdot)$ represents the sets of index locations.
The cross-layer sharing of sparsity patterns not only reduces memory requirements but also suggests that subspace interactions necessary for task adaptation may share structural patterns across network layers.
An ablation study we present in \S\ref{sec: sparsity ablation} confirms this empirically: shared and per-layer random masks perform similarly (plus/minus noise of each other).



\subsection{System Efficiency and Scalability}
Our design choices, specifically the use of algorithmically generated Fusion Frames and sparse coefficients, give a number of advantages over alternatives.

\paragraph{Storage efficiency.} Since Spectral Tetris generates Fusion Frames algorithmically, we eliminate the need to store basis matrices. Only the sparse coefficient matrices must be serialized. For a Llama-2-7B model, this reduces the checkpoint size to just 1.28 MB (vs. 67.1 MB for LoRA rank-32), facilitating ultra-low-latency ``hot-swapping" of adapters in multi-tenant serving scenarios where a single backbone supports thousands of fine-tuned models for different users.
Both numbers above assume FP32 precision. The frozen backbone operates in FP16/BF16 during training, but adapter parameters can be maintained in FP32 to preserve dynamic range/gradient fidelity.

\paragraph{Memory Overhead.} Unlike SVD-based methods such as SVFit \citep{sun2024svfitparameterefficientfinetuninglarge} and LoRA-XS \citep{bałazy2025loraxslowrankadaptationextremely}, which require unique basis vectors for each layer,  Fusion Frames generated by the Spectral Tetris algorithm depend {\em only} on the layer dimensions. This allows a single set of frames to be {\bf shared across all layers} with the same dimensions, significantly reducing the initialization time (see Table \ref{tab:basis generation time}) and runtime memory footprint.

\paragraph{Inference throughput.} While the orthogonal bases generated by SVD based methods are dense, Spectral Tetris directly constructs frames with inherent block-sparsity (see \S\ref{tab:fusion frames constructed}). This structural advantage directly translates to higher inference throughput (see  Table \ref{tab:computational time}).

Beyond these practical efficiencies, the construction of FrameFT also offers useful theoretical advantages for optimization. Next, we analyze how preserving the smoothness of the loss landscape allows stable convergence properties.

\subsection{Analysis of FrameFT Convergence properties}
\label{sec:convergence}
Our formulation of the parameter updates using Tight Fusion Frames and sparse coefficients enjoys theoretical benefits compared to LoRA.
The analysis in \citet{sun2024improving} describes how, for Lipschitz-smooth loss functions, LoRA
%
can create a non-smooth landscape when projected onto the parameters in $A$ and $B$. Via Lemma \ref{lem: lipschitz}, we show that FrameFT preserves Lipschitz-smoothness: when a function exhibits Lipschitz smoothness w.r.t. $W$, this property holds when remapped onto our coefficient space $C_l$.

\begin{lemma}
For any differentiable function $f(W)$ that is $L$-Lipschitz smooth under Frobenius norm, with fusion frames characterized by frame bounds $(A_m, B_m)$ and $(A_{n}, B_{n})$ for output and input spaces, respectively, the transformed function $f(C_l) = f(W_0 + \textcolor{BlueViolet}{P}_m C_l\textcolor{BlueViolet}{P}_{n}^T)$ obeys:
\begin{align*}
||\nabla f(C_l^1) - \nabla f(C_l^2)||_F \le L B_m B_{n} ||C_l^1 - C_l^2||_F
\end{align*}
 \label{lem: lipschitz}
 \vspace{-15pt}
\end{lemma}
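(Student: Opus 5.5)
The plan is to compute the gradient of the composed map $g(C_l) := f(W_0 + P_m C_l P_n^T)$ by the chain rule, and then bound the resulting linear map using the fusion frame bounds.

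\textbf{Step 1: the gradient.} Writing $W(C_l) = W_0 + P_m C_l P_n^T$, the map is affine in $C_l$. Hence for any direction $H$,
$$\langle \nabla g(C_l), H\rangle_F = \langle \nabla f(W(C_l)), P_m H P_n^T\rangle_F = \langle P_m^T \nabla f(W(C_l)) P_n, H\rangle_F,$$
so $\nabla g(C_l) = P_m^T \nabla f(W(C_l)) P_n$. If the block-diagonal or sparse structure of $C_l$ is imposed, the true gradient is the orthogonal projection of this matrix onto the allowed coordinates. That projection is nonexpansive in Frobenius norm, so it can only help the bound, and I will note this and carry on with the full expression.

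\textbf{Step 2: reduce to operator norms.} Set $G_j = \nabla f(W(C_l^j))$. Then
$$\|\nabla g(C_l^1)-\nabla g(C_l^2)\|_F = \|P_m^T (G_1-G_2) P_n\|_F \le \|P_m\|_2\,\|P_n\|_2\,\|G_1-G_2\|_F,$$
using $\|XYZ\|_F\le \|X\|_2\|Y\|_F\|Z\|_2$. Applying $L$-smoothness of $f$ gives
$$\|G_1-G_2\|_F \le L\|P_m (C_l^1-C_l^2) P_n^T\|_F \le L\|P_m\|_2\|P_n\|_2\|C_l^1-C_l^2\|_F.$$
Combining the two, the Lipschitz constant is $L\|P_m\|_2^2\|P_n\|_2^2$.

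\textbf{Step 3: relate operator norms to frame bounds.} This is the step needing care. The matrix $P_m = [P_{m,1},\dots,P_{m,k}]$ is the synthesis operator, so $P_mP_m^T = \sum_i P_{m,i}P_{m,i}^T = \mathcal{S}_{\mathcal{W}}$ (taking unit weights, or absorbing $w_i$ into the blocks). This is the fusion frame operator, which satisfies $A_m I \preceq \mathcal{S}_{\mathcal{W}} \preceq B_m I$ by the fusion frame inequality, because $x^T\mathcal{S}x = \sum_i \|P_{m,i}^T x\|^2$. Therefore $\|P_m\|_2^2 = \|P_mP_m^T\|_2 \le B_m$, and likewise $\|P_n\|_2^2\le B_n$. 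Substituting gives exactly $L B_m B_n$.

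The main obstacle I expect is bookkeeping around this identification. Where $P_{m,i}$ is a $m\times\rho_m$ matrix with orthonormal columns (rather than a square projection), one needs $P_{m,i}P_{m,i}^T$ to be the orthogonal projection onto $\mathcal{W}_i$, so that $\|P_{m,i}^T x\| = \|\pi_i x\|$ and the frame inequality applies verbatim. One also has to state how the weights $w_i$ enter. I would fix the convention that the blocks are isometric embeddings scaled by $w_i$; the bound then follows. For Parseval frames it specializes to $L$, since $B_m = B_n = 1$.
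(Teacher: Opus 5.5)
Your proof is correct and takes essentially the same route as the paper. The paper also derives $\nabla f(C_l) = P_m^T \nabla f(W) P_n$ via directional derivatives, bounds both $\|P_m^T(\nabla f(W^1)-\nabla f(W^2))P_n\|_F$ and $\|W^1-W^2\|_F$ by factors of $\|P_m\|_{op}\|P_n\|_{op}$, and controls each operator norm by the upper frame bound through $\|P_m^T x\|_2^2 \le B_m\|x\|_2^2$, which is your $x^T\mathcal{S}x$ identity. Your remarks on the sparse-mask projection being nonexpansive and on how the blocks and weights are normalized are careful additions that the paper leaves implicit.
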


Note that $f(W)$ denotes the global loss function of the entire deep neural network, defined over the collection of all layer weights $W = \{W_1, \dots, W_D\}$ just like in \citep[see p.~13]{sun2024improving}. Thus, $f$ implicitly encodes the composition of all network layers. During fine-tuning, with pre-trained weights frozen, the loss landscape is reparameterized purely as a function of the coefficient matrices $f(C)$, where $C = \{C_1, \dots, C_D\}$ represents the set of learnable Frame coefficients across all layers.

Since FrameFT preserves smoothness, we can immediately invoke a broad set of convergence results for Lipschitz-smooth functions \citep{bubeck2015convexoptimizationalgorithmscomplexity, zhou2017convergence}. 
We highlight one result: gradient descent constrained to learning rates $\eta \le 1/\tilde{L}$ achieves the expected $1/T$ convergence rate to first-order stationary points (in Theorem \ref{th:1/T convergence rate} below).
Additional details are provided in \S\ref{sec: convergence guarantee proof}.

\begin{theorem}
Consider minimizing  $f(C_l) = f(W_0 + \textcolor{BlueViolet}{P}_m C_l\textcolor{BlueViolet}{P}_{n}^T)$ where: $f(C_l)$ is $\tilde{L}$-Lipschitz smooth (but potentially non-convex) and $f(C_l)$ is lower bounded by $f^*$.  For gradient descent with step size $\eta \le 1/\tilde{L}$, running gradient descent for $T$ iterations satisfies: 
\begin{align*}
(1/T)\sum_{t=0}^{T-1} ||\nabla f(C_l^t)||_F^2 \le 2(f(C_l^0) - f^*)/( \eta T)
\end{align*}

\label{th:1/T convergence rate}
\end{theorem}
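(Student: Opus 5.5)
The plan is the standard descent-lemma argument for smooth nonconvex functions, applied in the coefficient space $C_l$. The smoothness hypothesis is available because Lemma~\ref{lem: lipschitz} supplies it: $f(C_l)$ is $\tilde L$-smooth with $\tilde L = L B_m B_n$. The inner product is the Frobenius (trace) inner product on the space of admissible coefficient matrices $C_l$. If the sparsity pattern from \S\ref{sec:structured coefficient matrix} is imposed, this space is the linear subspace of matrices supported on $\mathrm{supp}(C_l)$. In that case gradients are understood as projections onto that subspace, and smoothness restricts to it without change.

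\textbf{Step 1: quadratic upper bound.} First I will derive the quadratic upper bound from $\tilde L$-smoothness. By the fundamental theorem of calculus along the segment from $C$ to $C'$, together with Cauchy--Schwarz,
\begin{align*}
f(C') \le f(C) + \langle \nabla f(C), C'-C\rangle_F + \tfrac{\tilde L}{2}\|C'-C\|_F^2 .
\end{align*}

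\textbf{Step 2: per-step decrease.} Next I will substitute the gradient step $C_l^{t+1} = C_l^t - \eta \nabla f(C_l^t)$. This gives
\begin{align*}
f(C_l^{t+1}) \le f(C_l^t) - \eta\bigl(1 - \tfrac{\tilde L\eta}{2}\bigr)\|\nabla f(C_l^t)\|_F^2 .
\end{align*}
Since $\eta \le 1/\tilde L$, we have $1-\tilde L\eta/2 \ge 1/2$. Hence $\tfrac{\eta}{2}\|\nabla f(C_l^t)\|_F^2 \le f(C_l^t) - f(C_l^{t+1})$.

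\textbf{Step 3: telescoping.} Summing this inequality over $t=0,\dots,T-1$ telescopes the right-hand side to $f(C_l^0) - f(C_l^T)$. The lower bound $f \ge f^*$ bounds this by $f(C_l^0)-f^*$. Dividing by $\eta T/2$ then yields the stated average-gradient bound.

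I do not expect a genuine obstacle here. The only point needing care is Step 1, namely justifying the descent lemma in the matrix/coefficient parameterization. I would handle it by writing $f(C') - f(C) = \int_0^1 \langle \nabla f(C + s(C'-C)), C'-C\rangle_F\, ds$ and then bounding the difference of gradients using the Lipschitz constant from Lemma~\ref{lem: lipschitz}. This also makes clear that non-convexity plays no role: only smoothness and the lower bound $f^*$ are used.
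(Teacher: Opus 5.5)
Your proposal is correct and follows essentially the same route as the paper: the quadratic upper bound from $\tilde L$-smoothness, substitution of the gradient step, telescoping, the lower bound $f^*$, and $\eta\le 1/\tilde L$ giving $1-\tilde L\eta/2\ge 1/2$. The differences are cosmetic: you justify the descent inequality via the integral form where the paper simply invokes it, and you apply the factor $1/2$ at each step before summing rather than after.
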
 

This analysis offers a unified view of subspace-based fine-tuning. Our results can be extended to methods such as SVFit and FourierFT by substituting the appropriate projection matrices and bounds. Lemma \ref{lem: lipschitz} explicitly links the Lipschitz constant of the reparameterized loss $f(C)$ to the upper frame bounds $B_m, B_n$. This result directly guides our design choice: to maximize the smoothness of the loss landscape, we normalize our Frame bounds to unity (Parseval frames) for all experiments.
To summarize, Lemma~\ref{lem: lipschitz} informs the design choices via the bound $\tilde{L} = LB_m B_n$.

\section{Experiments}
\label{sec:Experiments}
In this section, we compare the performance of Transformer-based models fine-tuned with FrameFT. All our experiments are conducted on two NVIDIA A100 GPUs with 40GB of memory each. Individual subsections include more details on the experimental setup.
Additional experiments on varying hyperparameters and discussions on subspace dimensions are available in the Appendix \S\ref{sec: Performance versus subspace dimension}, \S\ref{sec: glue vs n appendix}, and \S\ref{sec:discussions}.

\subsection{Natural Language Understanding Capabilities}
\label{sec: language model exp}

\begin{table}[!tb]
\centering
\renewcommand{\arraystretch}{1.2}
\setlength{\tabcolsep}{1.25pt}
{\scriptsize
\caption{{\footnotesize Fine-Tuning RoBERTa Base and Large models on GLUE benchmark. * indicates the results reported in prior work. FrameFT performs better than full fine-tuning and LoRA, using $10\times$ fewer parameters.}}
\label{tab:glue benchmark}
\vspace{-5pt}
\begin{tabular}{clllllllll}
\toprule
\textbf{Model}	    & \textbf{Method}   & \textbf{Params}   & \textbf{SST-2}   & \textbf{MRPC}    & \textbf{CoLA}    & \textbf{QNLI}   & \textbf{RTE}   & \textbf{STS-B}   & \textbf{Avg.} \\
\midrule
\multirow{8}{*}{\rotatebox{90}{RoBERTa Base}}		& FF* & $125$M		& $94.8$	& $90.2$	& $63.6$	& $\bm{92.8}$	& $78.7$	& $91.2$	& $85.2$ \\
    & SMT & $55$K & $94.2$ & $89.5$ & $63.9$ & $91.8$ & $78.1$ & $90.4$ & $84.7$ \\
 	& LoRA*			 & $0.3$M		& $\bm{95.1}$	& $89.7$	& $63.4$	& $93.3$	& $78.4$	& $\bm{91.5}$	& $85.2$ \\
    & $\text{SVFT}^R_{d=2}$ & $92$K & $94.3$ & $89.4$ & $62.4$ & $92.0$ & $78.3$ & $91.1$ & $84.6$  \\
 	& AdaLoRA*		& $0.3$M		& $94.5$	& $88.7$	& $62.0$	& $93.1$	& $\bm{81.0}$	& $90.5$	& $85.0$ \\
    & FourierFT*	& $24$K     & $94.2$	& $90.0$	& $63.8$	& $92.2$	& $79.1$	& $90.8$	& $85.0$ \\
    & SVFit*		& $18$K		& $92.4$	& $90.0$	& $63.8$	& $90.8$	& $78.0$	& $92.4$	& $85.1$ \\
    & \cellcolor{Gray}FrameFT  		 & \cellcolor{Gray}$24$K     & \cellcolor{Gray}$94.3$	& \cellcolor{Gray}$\bm{92.3}$	& \cellcolor{Gray}$\bm{66.8}$	& \cellcolor{Gray}$92.4$	& \cellcolor{Gray}$79.8$	& \cellcolor{Gray}$90.9$	& \cellcolor{Gray}$\bm{86.1}$ \\
\midrule
\multirow{10}{*}{\rotatebox{90}{RoBERTa Large}}		& FF* & $356$M		& $\bm{96.4}$	& $90.9$	& $68.0$	& $94.7$	& $86.6$	& $\bm{92.4}$	& $88.2$ \\
            & SMT	    & $1.5$M	    & $96.0$	    & $89.7$	    & $69.4$	& $93.6$	& $84.1$	& $91.8$	& $87.4$ \\
            & LoRA*			 & $0.8$M		& $96.2$	& $90.2$	& $68.2$	& $\bm{94.8}$	& $85.2$	& $92.3$	& $88.2$ \\
            & $\text{SVFT}^R_{d=2}$ & $0.25$M       & $96.1$     & $90.2$     & $66.7$     & $94.3$     & $83.0$     & $92.1$ & $87.1$     \\
            & VeRA	& $61$K	& $96.1$	& $90.9$	& $68.0$	& $94.4$	& $85.9$	& $91.7$	& $87.8$ \\
            & LoRA-XS*			 & $60$K		& $96.3$	& $91.2$	& $68.6$	& $94.3$	& $\bm{89.5}$	& $92.2$	& $\bm{88.7}$ \\
 	          & RoseLoRA*		& $53.4$K		& $95.2$	& $90.2$	& $69.2$	& $94.7$	& $89.2$	& $92.0$	& $88.5$ \\
		      & FourierFT*	& $48$K	    & $96.0$	& $90.9$	& $67.1$	& $94.4$	& $87.4$	& $91.9$	& $88.0$ \\
 	          & SVFit*		& $36$K		& $96.2$	& $90.9$	& $\bm{71.4}$	& $94.4$	& $86.3$	& $92.0$	& $88.5$ \\
			& \cellcolor{Gray}FrameFT 	& \cellcolor{Gray}$48$K	& \cellcolor{Gray}$96.2$	& \cellcolor{Gray}$\bm{92.6}$	& \cellcolor{Gray}$69.8$	& \cellcolor{Gray}$93.4$	& \cellcolor{Gray}$\bm{88.1}$	& \cellcolor{Gray}$91.9$	& \cellcolor{Gray}$\bm{88.7}$ \\
\bottomrule
\end{tabular}
}
\vspace{-16pt}
\end{table}

\paragraph{Evaluation framework:} We evaluate the performance of FrameFT by fine-tuning the base and large variants of RoBERTa \cite{liu2019robertarobustlyoptimizedbert} across multiple GLUE benchmark tasks \cite{wang2018glue}. This suite of benchmarks covers sentiment classification, paraphrase detection, and entailment recognition. It is a standardized setup for testing.

\paragraph{Fine-tuning strategy:} We use the original LoRA \cite{hu2022lora} recipe by fine-tuning the Query and Value matrices across network layers. For FrameFT, we use $1000$ non-zero coefficient entries with their positions determined randomly and shared across layers. For tight fusion frame construction, we use a subspace dimension $\rho = 2$, and the subspace count $k$ is calibrated so that $k\rho = n$ ($n$ is layer dimension).

\paragraph{Performance analysis:} 
We report the Pearson correlation coefficient (PCC) for the STS-B task,  Matthews correlation coefficient (MCC) for CoLA and accuracy for the remaining tasks.
For the baseline methods, we report LoRA \cite{hu2022lora}, AdaLoRA \cite{zhang2023adaloraadaptivebudgetallocation}, FourierFT \cite{fourierft}, SVFit \cite{sun2024svfitparameterefficientfinetuninglarge}, SMT \cite{he2025smt}, SVFT \cite{NEURIPS2024_48c368f1_svft} and RoseLoRA \cite{wang2024roselorarowcolumnwisesparse}. The results are presented in Table \ref{tab:glue benchmark}.
We see that FrameFT performs on par or better than LoRA and full fine-tuning on individual tasks, and performs better than all the baselines on average. FrameFT achieves this with $10\times$ fewer parameters when compared to LoRA. We also note that even though the number of parameters of SVFit \cite{sun2024svfitparameterefficientfinetuninglarge} is slightly lower than FrameFT, SVFit trains the singular values, keeping the singular vectors fixed. So, in practice, one would need to save the singular vectors for each layer after training, which increases the storage requirements. Also, for RoseLoRA \cite{wang2024roselorarowcolumnwisesparse}, {only the parameter count is reported in Table \ref{tab:glue benchmark},} but they use a different mask for each layer, so the storage cost is $3\times$ the number of parameters
if we account for the location of the non-zero coefficients.


\subsection{Instruction Tuning}
\label{sec:instruction tuning}
\paragraph{Benchmarking Framework:} We evaluate FrameFT for fine-tuning LLMs to follow instructions. We finetune 7B and 13B variants of Llama2 models \cite{touvron2023llama}, Gemma2 models \cite{gemmateam2024gemma2improvingopen} with 2B and 9B parameters and Llama 3.1 \cite{grattafiori2024llama3herdmodels} 8B model on the Alpaca instruction dataset \cite{alpaca2023}. We evaluate the performance of the fine-tuned models on the LM-evaluation harness \cite{eval-harness} by Eleuther AI. We use eight distinct challenge categories spanning reasoning, world knowledge, and generalization capabilities. 

\paragraph{FrameFT configuration:} We apply FrameFT with $n = 5000$ non-zero coefficients and adapt the Query and Value matrices for all the Transformer blocks. We determine the position of the non-zero coefficients at random and share them across all the layers. We use $\alpha=200$ across all of our experiments.
A sensitivity analysis sweeping $\alpha \in \{10, 100, 400, 600\}$ for Llama-2-7B is in the Appendix~\ref{sec: alpha sensitivity}.
We set the subspace dimension $\rho=2$ and as before, calculate the number of subspaces $k$ such that $k \rho = n$ for each layer. For all baselines, we choose the hyperparameters suggested for the respective method.

\paragraph{Performance Analysis:} {We compare FrameFT with LoRA \cite{hu2022lora}, $(\text{IA})^3$ \cite{liu2020tfew}, DoRA \cite{liu2024dora}, $\text{S}^2{FT}$ \cite{yang2024s2ft}, SVFT \cite{NEURIPS2024_48c368f1_svft}, and FourierFT \cite{fourierft}}. Our results are shown in Table \ref{tab:LM Eval Harness}. We observe that FrameFT performs on par or better compared to the baselines while using the fewest number of parameters across all models. In addition, FrameFT 
also provides other compute benefits we discuss in Section \ref{sec:practical considerations}. 

\vspace{-5pt}
\subsection{Performance on Vision transformer models}
\label{sec:Vision transformer section}
 We performed a set of experiments to determine whether FrameFT was effective only for Language models. To check this, we applied FrameFT to fine-tune Vision Transformers on 8 image classification tasks, which include remote sensing, fine-grained classification, and texture recognition.

\paragraph{Performance Analysis:} Section \ref{sec: vit exp appendix} in the appendix shows the performance of FrameFT across these tasks for ViT-L and ViT-B models.
\setlength{\columnsep}{5pt}
\begin{wraptable}{r}{4.5cm}
\centering
\setlength{\tabcolsep}{1.5pt}
\vspace{-5pt}
{\footnotesize
\caption{{\footnotesize Tokens per second for different PEFT methods.
}}
\label{tab:computational time}
\vspace{-5pt}
\begin{tabular}{p{0.5cm}|p{1.2cm}ll}
\toprule
\textbf{ } & \textbf{Method} & \textbf{\#params} & \textbf{\#toks/sec} \\
\midrule
\multirow{5}{*}{\rotatebox{90}{Llama-2-7b}} & LoRA & 262k & 23.6k \\
& DoRA & 262k & 14.1k \\
& SVFit & 1k & 9.0k \\ 
& FourierFT & 1k & 1.8k \\ 
& \cellcolor{Gray}FrameFT  & \cellcolor{Gray}1k & \cellcolor{Gray}$\bm{39.4k}$ \\
\midrule
\multirow{5}{*}{\rotatebox{90}{Gemma-2-9b}} & LoRA & 245k & 22.1k \\
& DoRA & 245k & 16.0k \\
& SVFit & 1k & 12.4 \\ 
& FourierFT & 1k & 1.4k \\ 
& \cellcolor{Gray}FrameFT  & \cellcolor{Gray}1k & \cellcolor{Gray}$\bm{29.6k}$ \\
\midrule
\multirow{5}{*}{\rotatebox{90}{Llama-3.1-8b}} & LoRA & 163k & 22.3k \\
& DoRA & 163k & 18.7k \\
& SVFit & 1k & 11.5k \\ 
& FourierFT & 1k & 1.6k \\ 
& \cellcolor{Gray}FrameFT & \cellcolor{Gray}1k & \cellcolor{Gray}$\bm{27.3k}$ \\
\bottomrule
\end{tabular}
}
\vspace{-20pt}
\end{wraptable}
 We observe that FrameFT performs better than the baseline methods (which include full-finetuning) on average.
Moreover, we observe performance improvements across the majority of the tasks when we increase the number of parameters for FrameFT. These results indicate that FrameFT generalizes well across both Vision and Language models. More experimental details are presented in Appendix \ref{sec: vit exp appendix} and Table \ref{tab: Hyperparameters for the image classification task}.

\begin{figure*}[bt!]
    \centering
    \vspace{0pt}
    \includegraphics[width=\linewidth]{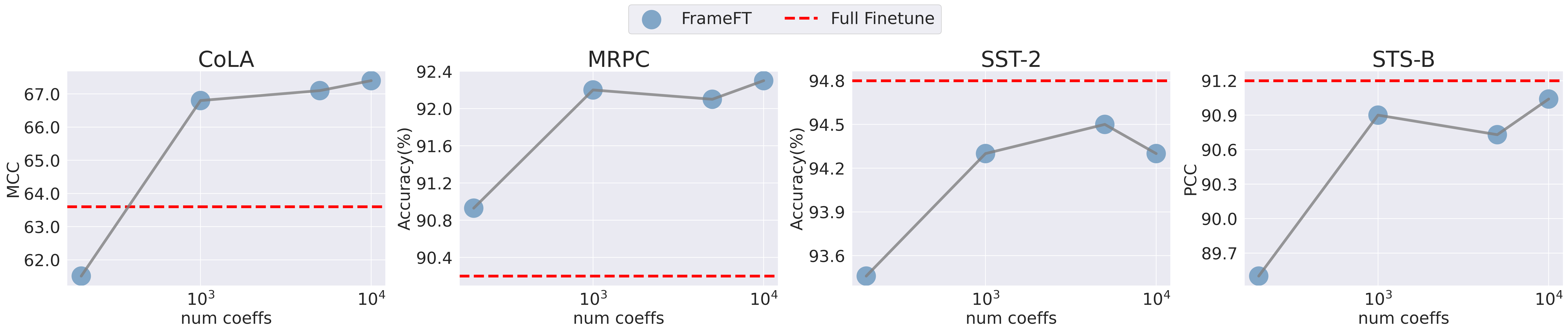}
    \vspace{-10pt}
 \caption{{\footnotesize Performance of RoBERTa base model fine-tuned with FrameFT versus the number of non-zero coefficients. The red line indicates the performance of full fine-tuning.}} 
 \vspace{-10pt}
  \label{fig: num coeffs sweep}
\end{figure*}

\subsection{Operational Efficiency/Latency Analysis}
\label{sec:practical considerations}

In this section, we analyze the {operational efficiency of FrameFT in terms of storage requirements and inference} throughput.
In practice, when serving LLMs to a large user base, the pretrained model is held fixed, and various specialized variants are maintained simultaneously. 
Hence, we measure the throughput of the adapter layer introduced by different PEFT methods, with the idea that the pretrained layer throughput remains the same in the pretrained and fine-tuned models.

\paragraph{Inference throughput:} {Table \ref{tab:computational time} displays the tokens per second for various methods across different model families.} All measurements are performed on an NVIDIA A100 machine. We observe that FrameFT performs better than LoRA, which in turn performs better than FourierFT and other methods.
These measurements are near-deterministic on fixed hardware. Across 100 iterations the coefficient of variation is below 12\%, and the FrameFT--LoRA gap is large enough that variance does not affect the conclusion (e.g., on Llama-2-7B: $37.9 \pm 4.5$k for FrameFT vs.\ $24.4 \pm 2.7$k tokens/sec for LoRA).
This highlights FrameFT's efficiency benefits due to sparse coefficients and sparse projection matrices as described in \S\ref{sec:structured coefficient matrix} and Appendix \S\ref{sec: example of fusion frames} 

{ 
\paragraph{Storage and Memory Footprint.}
A key advantage of FrameFT is its minimal storage footprint for task-specific adapter checkpoints. Since Fusion Frames are generated algorithmically {once} and shared across layers of the same dimension, the basis vectors ($P_m, P_n$) do not need to be stored or transmitted. We only store the sparse coefficient matrices. This is a clear advantage compared to SVD-based methods such as SVFT and LoRA-XS, where one needs to store the dense orthogonal basis for each layer, increasing their storage/memory requirements.
}

We note that constructing the Fusion Frames takes a fixed amount of time. Specifically, the time complexity for constructing a $(k, \rho, d)$ TFF is $\mathcal{O}(kd)$. However, this generation step is performed only {\bf once} during initialization, and the resulting projection matrices are shared across all network layers, amortizing the one-time compute cost across multiple layers and forward calls (see \S\ref{sec: Fusion Frames construction time} for wall clock time). So, we focus on per-layer throughput measurements.

\begin{table*}[!bt]
\centering
{\footnotesize 
\setlength{\tabcolsep}{2pt}
\caption{{\footnotesize Performance of LLMs fine-tuned on the Alpaca dataset by various methods and then evaluated on the LM-evaluation-harness. 
FrameFT performs competitively with all the baseline methods under comparison, using a $30\times$ fewer number of parameters than LoRA. FrameFT needs minimal hyperparameter tuning.
}}
\label{tab:LM Eval Harness}
\vspace{5pt}
\begin{tabular}{cp{2cm}p{1cm}ccccccccc}
\toprule
\textbf{Model}  & \textbf{Method}  & \textbf{\#Params}		& \textbf{ARC-c}	& \textbf{ARC-e}	& \textbf{BoolQ}	& \textbf{HellaSwag}	& \textbf{OBQA}	& \textbf{PIQA}	& \textbf{RTE}	& \textbf{WinoGrande} & \textbf{Avg.}\\
\midrule
\multirow{8}{*}{\rotatebox{90}{Llama-2-7b}}  & LoRA & $16.7$M & $45.82$ & $77.02$ & $78.81$ & $58.08$ & $35.2$ & $78.83$ & $61.01$ & $70.72$ & $63.18$ \\
 & $\text{(IA)}^3$ & 614K & $45.38$ & $76.60$ & $77.89$ & $57.74$ & $34.6$ & $78.78$ & $64.26$ & $68.82$ & $63.01$  \\
 & DoRA & $16.7$M & $45.14$ & $76.39$ & $78.35$ & $57.78$ & $34.0$ & $78.73$ & $67.15$ & $67.17$ & $63.09$ \\
 & $\text{S}^2{FT}$ & $56.6$M & $46.16$ & $77.44$ & $78.38$ & $58.10$ & $32.8$ & $78.84$ & $61.01$ & $68.58$ & $62.66$  \\
 & $\text{SVFT}^B_{d=8}$ & $4.5$M & $45.79$ & $77.07$ & $78.83$ & $57.98$ & $34.6$ & $78.56$ & $61.52$ & $70.64$ & $63.12$ \\
& FourierFT & $320$K  & $44.96$ & $77.14$ & $79.05$ & $58.21$ & $34.6$ & $78.89$ & $62.45$ & $70.48$ & $63.22$ \\
& \cellcolor{Gray}FrameFT & \cellcolor{Gray}$320$K & \cellcolor{Gray}$45.22$ & \cellcolor{Gray}$76.93$ & \cellcolor{Gray}$78.62$ & \cellcolor{Gray}$58.08$ & \cellcolor{Gray}$34.2$ & \cellcolor{Gray}$78.62$ & \cellcolor{Gray}$66.06$ & \cellcolor{Gray}$71.19$ & \cellcolor{Gray}$\bm{63.62}$ \\
\midrule
\multirow{8}{*}{\rotatebox{90}{Llama-2-13b}} & LoRA & $26.2$M  & $50.77$ & $80.35$ & $81.44$ & $61.13$ & $36.0$ & $79.38$ & $69.68$ & $72.61$ & $66.42$ \\
 & $\text{(IA)}^3$ & $963$K & $50.25$ & $79.96$ & $80.52$ & $60.54$ & $34.2$ & $79.27$ & $68.59$ & $72.22$ & $65.69$ \\
 & DoRA & $26.2$M & $51.79$ & $80.13$ & $80.21$ & $61.44$ & $35.6$ & $79.81$ & $71.48$ & $72.21$ & $\bm{66.58}$ \\
 & $\text{S}^2{FT}$ & $111$M & $20.73$ & $35.10$ & $46.17$ & $32.94$ & $14.6$ & $58.48$ & $55.23$ & $50.04$ & $39.16$ \\
 & $\text{SVFT}^B_{d=8}$ & $7.1$M & $50.45$ & $80.42$ & $80.51$ & $60.82$ & $34.8$ & $79.22$ & $69.58$ & $72.06$ & $65.98$ \\
& FourierFT & $400$K & $49.74$	& $80.17$ & $80.82$ & $60.98$ & $35.6$ & $79.76$ & $64.62$ & $72.06$ & $65.47$ \\
& \cellcolor{Gray}FrameFT  & \cellcolor{Gray}$400$K & \cellcolor{Gray}$50.34$	& \cellcolor{Gray}$79.75$ & \cellcolor{Gray}$81.19$ & \cellcolor{Gray}$60.87$ & \cellcolor{Gray}$35.8$ & \cellcolor{Gray}$80.08$ & \cellcolor{Gray}$68.59$ & \cellcolor{Gray}$72.29$ & \cellcolor{Gray}$66.11$ \\
\midrule
\multirow{8}{*}{\rotatebox{90}{Gemma-2-2B}} & LoRA & $6.4$M & $48.63$ & $79.76$ & $76.61$ & $55.85$ & $31.6$ & $78.94$ & $62.09$ & $68.19$ & $62.70$ \\
 & $\text{(IA)}^3$ & 292K & $46.76$ & $80.13$ & $70.70$ & $55.79$ & $31.2$ & $78.13$ & $59.21$ & $69.77$ & $61.46$ \\
 & DoRA & $6.4$M & $48.72$ & $80.35$ & $72.26$ & $55.96$ & $33.6$ & $78.56$ & $67.15$ & $68.98$ & $63.20$ \\
 & $\text{S}^2{FT}$ & $16.9$M & $45.90$ & $78.41$ & $59.24$ & $54.80$ & $31.4$ & $77.80$ & $57.40$ & $56.27$ & $57.65$ \\
 & $\text{SVFT}^B_{d=8}$ & $1.4$M & $48.72$ & $81.10$ & $71.13$ & $56.37$ & $33.6$ & $78.84$ & $65.70$ & $70.40$ & $63.23$ \\
& FourierFT & $260$K & $45.98$ & $79.12$ & $73.88$ & $55.57$ & $32.8$ & $79.16$ & $67.87$ & $67.87$ & $62.78$ \\
& \cellcolor{Gray}FrameFT  & \cellcolor{Gray}$260$K & \cellcolor{Gray}$48.37$ & \cellcolor{Gray}$81.06$ & \cellcolor{Gray}$75.13$ & \cellcolor{Gray}$55.71$ & \cellcolor{Gray}$34.6$ & \cellcolor{Gray}$79.05$ & \cellcolor{Gray}$69.67$ & \cellcolor{Gray}$69.30$ & \cellcolor{Gray}$\bm{64.11}$ \\
\midrule
\multirow{8}{*}{\rotatebox{90}{Gemma-2-9B}} & LoRA & $17.9$M & $64.76$ & $88.22$ & $86.39$ & $62.96$ & $36.2$ & $82.48$ & $70.40$ & $75.30$ & $\bm{70.83}$ \\
 & $\text{(IA)}^3$ & 774K & $61.95$ & $87.21$ & $85.05$ & $62.11$ & $35.8$ & $81.56$ & $68.95$ & $74.27$ & $69.61$ \\
 & DoRA & $17.9$M & $62.62$ & $87.16$ & $86.02$ & $62.91$ & $35.2$ & $81.61$ & $71.48$ & $73.71$ & $69.09$ \\
 & $\text{S}^2{FT}$ & $74.4$M & $53.41$ & $80.55$ & $80.73$ & $59.02$ & $32.8$ & $79.54$ & $69.67$ & $70.32$ & $65.76$ \\
 & $\text{SVFT}^B_{d=8}$ & $3.9$M & $63.25$ & $87.88$ & $86.54$ & $63.11$ & $36.2$ & $81.72$ & $71.12$ & $73.64$ & $70.43$ \\
& FourierFT & $420$K & $64.16$ & $88.21$ & $86.36$ & $62.78$ & $36.4$ & $81.66$ & $70.03$ & $74.27$ & $70.48$ \\
& \cellcolor{Gray}FrameFT  & \cellcolor{Gray}$420$K & \cellcolor{Gray}$65.01$ & \cellcolor{Gray}$88.00$ & \cellcolor{Gray}$86.02$ & \cellcolor{Gray}$63.09$ & \cellcolor{Gray}$37.2$ & \cellcolor{Gray}$81.72$ & \cellcolor{Gray}$69.31$ & \cellcolor{Gray}$74.82$ & \cellcolor{Gray}$70.65$ \\
\midrule
\multirow{8}{*}{\rotatebox{90}{Llama-3.1-8B}}  & LoRA & $13.6$M & $55.38$ & $83.38$ & $82.08$ & $61.73$ & $35.2$ & $81.23$ & $74.01$ & $75.53$ & $\bm{68.56}$ \\
 & $\text{(IA)}^3$ & 524K & $54.27$ & $82.79$ & $82.20$ & $61.04$ & $34.4$ & $80.58$ & $68.95$ & $74.59$ & $67.35$ \\
 & DoRA & $13.6$M & $54.01$ & $82.07$ & $81.31$ & $61.07$ & $34.6$ & $81.23$ & $67.51$ & $72.77$ & $66.82$ \\
& $\text{S}^2{FT}$ & $65.4$M & $31.57$ & $60.02$ & $59.82$ & $44.24$ & $20.0$ & $68.88$ & $56.68$ & $53.67$ & $49.36$ \\
 & $\text{SVFT}^B_{d=8}$ & $2.8$M & $53.12$ & $82.39$ & $82.32$ & $60.58$ & $36.2$ & $80.53$ & $73.36$ & $74.02$ & $67.81$ \\
& FourierFT & $320$K & $51.36$ & $80.51$ & $81.49$ & $60.57$ & $34.0$ & $80.41$ & $70.39$ & $73.95$ & $66.58$ \\
& \cellcolor{Gray}FrameFT  & \cellcolor{Gray}$320$K & \cellcolor{Gray}$53.41$ & \cellcolor{Gray}$82.53$ & \cellcolor{Gray}$82.29$ & \cellcolor{Gray}$60.79$ & \cellcolor{Gray}$34.6$ & \cellcolor{Gray}$80.90$ & \cellcolor{Gray}$75.09$ & \cellcolor{Gray}$73.40$ & \cellcolor{Gray}$67.87$ \\
\bottomrule
\end{tabular}}
\vspace{-10pt}
\end{table*}

\subsection{Performance vis-a-vis non-zero coefficients}
We evaluate the performance of FrameFT as we increase the number of non-zero coefficients.
We choose the RoBERTa base model for this experiment and check the performance of the model fine-tuned with FrameFT on different tasks in the GLUE benchmark.
Figure \ref{fig: num coeffs sweep} shows expected trends:  
the performance of the fine-tuned model on different tasks increases as we increase the number of non-zero terms in the coefficient matrix. Table \ref{tab:vit finetuning table appendix} also confirms this behavior as we increase $n$ from $1000$ to $5000$. These results show that FrameFT is able to utilize the additional parameters to improve the fine-tuning performance for both vision and language tasks. Additional experiments on more tasks from the GLUE benchmark are included in appendix \ref{sec: glue vs n appendix}. 

\subsection{Sparsity Pattern and Cross-Layer Sharing Ablation}
  \label{sec: sparsity ablation}
  In this experiment, we ablate the choice of random shared sparsity by comparing four masking strategies -- a random shared mask across all layers, a per layer random mask, magnitude-based non-zero coefficients selection and a gradient-based mask for FrameFT. We evaluate the performance of each of these methods for finetuning RoBERTA-base model on the GLUE benchmark. For the random mask strategies, we average the performance accross five different seeds each. Results are shown in Table~\ref{tab:sparsity ablation}.

  \begin{table*}[tb!]
  \centering
  \renewcommand{\arraystretch}{1.2}
  \setlength{\tabcolsep}{2pt}
  {\footnotesize
  \caption{{\footnotesize Sparsity mask ablation on RoBERTa-Base/GLUE (averaged over five seeds). Shared and per-layer random masks perform within noise of each other while adaptive heuristics perform strictly worse.}}
  \label{tab:sparsity ablation}
  \vspace{5pt}
  \begin{tabular}{lclllllll}
  \toprule
  \textbf{Method} & \textbf{SST-2} & \textbf{MRPC} & \textbf{CoLA} & \textbf{QNLI} & \textbf{RTE} & \textbf{STS-B} & \textbf{Avg.} \\
  \midrule
  Shared random mask   & $94.40{\pm}0.17$ & $92.31{\pm}0.46$ & $66.58{\pm}0.77$ & $92.35{\pm}0.12$ & $80.21{\pm}0.82$ & $90.96{\pm}0.11$ & $86.18{\pm}0.18$ \\
  Per-layer random mask & $94.10{\pm}0.16$ & $93.53{\pm}0.32$ & $66.44{\pm}0.61$ & $92.31{\pm}0.14$ & $80.34{\pm}0.39$ & $90.86{\pm}0.09$ & $86.26{\pm}0.17$ \\
  Magnitude-based mask & $94.19$ & $92.49$ & $65.82$ & $91.61$ & $78.40$ & $91.01$ & $85.60$ \\
  Gradient-based mask  & $89.39$ & $74.59$ & $60.30$ & $84.80$ & $63.49$ & $76.40$ & $74.83$ \\
  \bottomrule
  \end{tabular}
  }
  \end{table*}

  We see that cross-layer sharing incurs no expressivity cost: shared and per-layer masks are indistinguishable within variance. This aligns with Lemma~\ref{lem: lipschitz}: the optimization geometry depends on the Fusion Frame bounds and not on the specific entries of the coefficient matrix, which are non-zero.
  Interestingly, adaptive heuristics perform strictly worse. Gradient-based masks collapse to 74.83 average, lower than random. We suspect this is due to task-relevant subspace interactions being broadly
  distributed across the coefficient matrix rather than concentrated in high-magnitude or high-gradient pretrained weights. Random sampling provides better coverage of this distributed structure than greedy
  selection.

\subsection{Training Time and GPU Memory}
  \label{sec: training efficiency}

  We measured training time and peak GPU memory across all five models evaluated in Section~\ref{sec:instruction tuning}. Table~\ref{tab:training efficiency} reports these measurements. FrameFT trains 5-7\% faster than LoRA and within 0.6 GiB of its peak GPU memory. The modest memory gap reflects the fact that the frozen pretrained backbone dominates the total GPU footprint. Both these measurements use standard PyTorch without custom kernels. Optimized kernels will be available on the GitHub repository shortly. 

  \begin{table*}[tb!]
  \centering
  \renewcommand{\arraystretch}{1.2}
  \setlength{\tabcolsep}{3pt}
  {\footnotesize
  \caption{{\footnotesize Training time (minutes) and peak GPU memory (GiB) for LoRA and FrameFT on the Alpaca instruction tuning task.}}
  \label{tab:training efficiency}
  \vspace{5pt}
  \begin{tabular}{llccccc}
  \toprule
  \textbf{Metric} & \textbf{Method} & \textbf{Llama-2-7B} & \textbf{Llama-2-13B} & \textbf{Gemma-2-9B} & \textbf{Gemma-2-2B} & \textbf{Llama-3.1-8B} \\
  \midrule
  \multirow{2}{*}{Training time (min)} & LoRA    & 238 & 363 & 323 & 126 & 231 \\
                                        & FrameFT & 225 & 345 & 301 & 115 & 212 \\
  \midrule
  \multirow{2}{*}{GPU memory (GiB)}    & LoRA    & 30.1 & 34.8 & 37.8 & 26.9 & 38.2 \\
                                        & FrameFT & 29.5 & 34.7 & 37.7 & 26.8 & 37.6 \\
  \bottomrule
  \end{tabular}
  }
  \end{table*}

\section{Related work}

Model adaptation for downstream tasks has been studied extensively in recent years, and has provided various efficient methods that reduce computation and storage needs while maintaining performance. Here, we describe different variants of PEFT methods briefly introduced in \S\ref{sec:intro}.

\paragraph{Adapters:} Adapters introduce specialized modules between pre-existing layers within the pretrained model. These adapter layers are trained during fine-tuning while keeping the pretrained model parameters frozen \citep{houlsby19a, karimi-mahabadi-etal-2021-parameter, he2022unifiedviewparameterefficienttransfer}. By keeping the parameters of the original model frozen, they preserve the knowledge acquired during pretraining while reducing the risk of overfitting.

\paragraph{Low-Rank Matrix Factorization:} LoRA techniques reparameterize the weight updates on selected layers of the model through low-rank factorizations \citep{hu2022lora}. This framework of training only the decomposition matrices while freezing pretrained parameters has led to many variants exploring asymmetric chaining \citep{malinovsky2024randomizedasymmetricchainlora}, quantization-aware formulations \citep{dettmers2023qlora}, different learning rates for the update terms \citep{hayou2024lora}
, and compressing the adapters into a single-matrix update \cite{bensaïd2025singloralowrankadaptation}.

\paragraph{Prefix and Prompt Tuning.} Prefix-tuning strategies prepend learnable vector sequences before transformer layer inputs, creating controllable input modifications while keeping the pretrained model fixed \citep{wang2025prefixtuningmodernizingprefixtuningdecoupling, li-liang-2021-prefix, qin2021learning, lester2021powerscaleparameterefficientprompt, liu2022ptuningv2prompttuning}. Closely related, Prompt-tuning approaches learn different prompt representations to guide model behaviors without architectural modification \citep{xiao2025dynapromptdynamictesttimeprompt, lester-etal-2021-power, liu-etal-2022-p, ge2022domainadaptationpromptlearning}. Unlike prefix tuning, prompt tuning operates exclusively at the input embedding level, making it particularly efficient and easy to integrate \citep{lester-etal-2021-power}. This approach has been widely extended to domain adaptation \citep{ge2022domainadaptationpromptlearning}, vision-language models \citep{Zhou_2022}, and diffusion models \citep{dong2023prompttuninginversiontextdriven, chung2023prompttuninglatentdiffusionmodels}.

%

\paragraph{Sparse Fine-Tuning:} Sparse fine-tuning methodologies exploit natural parameter redundancy by targeting only critical components while freezing the rest \citep{iurada2025efficient, khaki2025sparseloraacceleratingllmfinetuning, he2025smt, lu2024spp, fourierft, guo2021parameterefficienttransferlearningdiff}. These approaches, whether through low-rank operations \cite{lu2024spp, dragomir2026jumplorasparseadapterscontinual} or spectral compression via Fourier transformations \cite{fourierft, zhang2026s2ftparameterefficientfinetuningsparse} -- achieve competitive/superior performance compared to full fine-tuning but reducing trainable parameter count.

\vspace{-5pt}
\section{Conclusions}

We describe FrameFT, a parameter-efficient fine-tuning framework that leverages structured subspace decompositions based on Fusion Frames to fine-tune transformer models for vision and language tasks. 
Our extensive empirical validation across both vision transformers and state-of-the-art language models (including the Llama and Gemma families), shows that substantial compute and parameter efficiency gains are achievable without sacrificing performance across many evaluation benchmarks.
We also provide analysis showing that FrameFT preserves the Lipschitz smoothness of the loss landscape, and so 
achieves desirable convergence properties.
We believe that one key advantage of parameter efficiency of 
FrameFT will be in situations where the use case requires a 
{\em set} of fine-tuned models, each fine-tuned on a specific task and invoked on a case-by-case basis. We note that support for structured sparsity (beyond $2:4$ sparsity) remains limited but this provides a concrete opportunity 
for higher efficiency gains if specialized kernels are implemented. 
The code is available at \url{https://github.com/vsingh-group/FrameFT}.

%

\section*{Impact Statement}

This paper presents work whose goal is to advance the field of Machine Learning by introducing more efficient methods for adapting foundation models. FrameFT aims to lower the computational cost and storage requirements for fine-tuning, which can facilitate broader accessibility and reduced energy usage in AI applications. We are not aware of any specific negative societal consequences or ethical issues unique to this work that require further highlighting.

\bibliography{example_paper}
\bibliographystyle{icml2026}

\newpage
\appendix
\onecolumn

\section*{Appendix}
In this appendix, we provide proofs for the theorems and additional details on the experiments presented in the main paper.
In Section \ref{sec: lemma1 proof}, we provide a detailed proof for the lemma showing that FrameFT preserves the Lipschitz smoothness of the loss function.
Section \ref{sec: convergence guarantee proof} presents the results for the convergence guarantee of Gradient Descent to a stationary point for Lipschitz smooth functions. 
Section \ref{sec: vit exp appendix} evaluates the performance of FrameFT for Image classification tasks.
In Section \ref{sec: Performance versus subspace dimension}, we measure the performance of FrameFT as we vary the subspace dimension.
In Section \ref{sec: glue vs n appendix}, we show the performance of FrameFT with an increasing number of non-zero coefficients. We observe that FrameFT can leverage the additional parameters to improve the performance. 
We provide some additional discussions in Section \ref{sec:discussions} on the Frames and different subspace dimensions in Fusion Frames.
Section \ref{sec: Hyperparameters used for the experiments} lists the hyperparameters used for our experiments. 
In Section \ref{sec: example of fusion frames}, we describe an algorithm for constructing Tight Fusion Frames along with an example.
Finally, in Section \ref{sec: alpha sensitivity}, we analyze the sensitivity of FrameFT to the scaling factor $\alpha$ and observe that FrameFT performs well above the baselines across a wide range of $\alpha$ values.

\section{FrameFT preserves Lipschitz smoothness}
\label{sec: lemma1 proof}
Here, for working through the proof, assume the following dimensions for the matrices involved: 
$P_m$ is $m \times p$, $P_{n}$ is $n \times q$, 
$C_l^{\cdot}$ is $p \times q$ and $W_0$ is $m \times n$.

\textbf{Proof.} Let us consider how the loss function behaves at two different points. Take any two coefficient matrices $C_l^1$ and $C_l^2$. From our composition rule, these map to: $W^1 = W_0 + P_mC_l^1P_{n}^T$ and $W^2 = W_0 + P_mC_l^2P_{n}^T$. Scalar is omitted for notation simplicity.

By the $L$-Lipschitz smoothness assumption on $f(W)$: 
\begin{align*}
||\nabla f(W^1) - \nabla f(W^2)||_F \le L||W^1 - W^2||_F 
\end{align*}
where we use the notation $\nabla f (W) = \frac{\partial f}{\partial W}$, the derivative of $f$ with respect to $W$. 
We also have $W^1 - W^2 = P_m (C_l^1 - C_l^2)P_{n}^T$ and we will 
show shortly that $||W^1 - W^2||_F$ 
is upper-bounded by terms involving 
$(C_l^1 - C_l^2)$ and other constants.
The chain rule for matrix derivatives means the derivative of $f$ with respect to the coefficients $C_l$ is, 
\begin{align*}
\nabla f(C_l) &= \frac{\partial f}{\partial W} \cdot \frac{\partial W}{\partial C_l} = \nabla f(W) \cdot \nabla W(C_l).
\end{align*}
The term $ \nabla W(C_l)$ denotes how the weight matrix $W$ changes with respect to a change in the coefficient matrix $C_l$.
We will use directional derivatives to compute this.

Consider a matrix $\zeta_\Delta \in \mathbb{R}^{k \rho_m \times k \rho_{n}}$ which represents a direction. The directional derivative of $W$ with respect to $C_l$ in the direction of $\zeta_\Delta$ is given by
\begin{align*}
\nabla_{\zeta_{\Delta}} W(C_l) &= \lim_{t \rightarrow 0} \frac{W(C_l + t \zeta_\Delta) - W(C_l)}{t} 
= \lim_{t \rightarrow 0} \frac{P_m \zeta_\Delta P_{n}^T t}{t} 
= P_m \zeta_\Delta P_{n}^T
\end{align*}
Hence, the derivative of $f$ with respect to $C_l$ in the direction of $\zeta_\Delta$ is given by
\begin{align*}
    \nabla_{\zeta_\Delta} f(C_l) &= \nabla f(W) \cdot \nabla_{\zeta_\Delta} W(C_l) \\
    &= \text{Tr}( (\nabla f(W))^T (P_m \zeta_\Delta P_{n}^T)) \\
    &\overset{(a)}{=} \text{Tr}(P_{n}^T \nabla f(W)^T P_m \zeta_\Delta) \\
    &\overset{(b)}{=} \text{Tr}((P_m^T \nabla f(W) P_{n})^T \zeta_\Delta) \\
    &= \langle P_m^T \nabla f(W) P_{n}, \zeta_\Delta \rangle 
\end{align*}
The equality (a) follows from $\text{Tr}(AB) = \text{Tr}(BA)$ and the equality (b) follows from $(ABC)^T = C^T B^T A^T$.

From the definition of the directional gradient, we have
that $\nabla_{\zeta_\Delta} f(C_l) = \langle \nabla f(C_l), \zeta_\Delta \rangle$. But  
from above, we have $\nabla_{\zeta_\Delta} f(C_l) = \text{Tr}((P_m^T \nabla f(W) P_{n})^T \zeta_\Delta)$. These 
 two expressions are equal for all $\zeta_\Delta$. 
 Therefore, 
\begin{equation*}
    \nabla f(C_l) = P_m^T \nabla f(W) P_{n}
\end{equation*}

Hence, for two coefficient matrices $C_l^1$, $C_l^2$: 
\begin{align*}
||\nabla f(C_l^1) - \nabla f(C_l^2)||_F &= ||P_m^T (\nabla f(W^1) - \nabla f(W^2)) P_{n} ||_F \\
&\le ||P_m||_{\text op} || (\nabla f(W^1) - \nabla f(W^2)) P_{n} ||_F \\
&\le || \nabla f(W^1) - \nabla f(W^2) ||_F || P_m ||_{\text op} || P_{n} ||_{\text op} \\
& \le L ||W^1 - W^2||_F \sqrt{B_m} \sqrt{B_{n}} \numberthis \label{eq: nab f(c) to w diff}
\end{align*}

We can bound the operator norm of $P_m$ using the upper frame bounds.

$$|| P_m^T x ||_2^2 \leq B_m ||x||^2 \implies ||P_m||_{op} = ||P_m^T||_{op} \leq \sqrt{B_m}$$

Now, we 
need to bound $||W^1 - W^2||_F$. 
We know that 
$W^1 = W_0 + P_m C_l^1 P_{n}^T$ and $W^2 = W_0 + P_m C_l^2 P_{n}^T$. 
Therefore:
\begin{align}
W^1 - W^2 = P_m (C_l^1 - C_l^2) P_{n}^T
\end{align} 
Now
\begin{align*}
|| W^1 - W^2||_F &= ||P_m (C_l^1 - C_l^2) P_n^T||_F \\
            &\overset{(a)}{\leq} ||P_m||_{op} \cdot ||(C_l^1 - C_l^2) P_n^T||_F \\
            &\overset{(b)}{\leq} ||P_m||_{op} \cdot ||C_l^1 - C_l^2||_F \cdot ||P_n^T||_{op} \\
            &= \sqrt{(B_m B_n)} \cdot ||C_l^1 - C_l^2||_F \numberthis \label{w->c}
\end{align*}
(a) and (b) follow from $||AB||_F \le ||A||_{op} ||B||_F$ and $||AB||_F \le ||A||_F ||B||_{op}$ respectively. Substituting this bound in (\ref{eq: nab f(c) to w diff}), we get

\begin{align*}
||\nabla f(C_l^1) - \nabla f(C_l^2)||_F &\le \sqrt{B_m B_{n}} L \left( \sqrt{B_m B_{n}} ||C_l^1 - C_l^2||_F \right) \\
&= L B_m B_{n} || C_l^1 - C_l^2||_F
\end{align*}

Hence, $f(C_l)$ is $\tilde{L}$-Lipschitz smooth with $\tilde{L} = L B_m B_{n}$. 

\section{Convergence Guarantee based on Lipschitz smoothness}
\label{sec: convergence guarantee proof}
The gradient descent update at iteration $t$ is: $C_l^{t+1} = C_l^t - \eta \nabla f(C_l^t)$. 

Let $\tilde{L}$ be the Lipschitz constants derived in Section \ref{sec: lemma1 proof}. By $\tilde{L}$-Lipschitz smoothness:
$$
f(C_l^{t+1}) \le f(C_l^t) + \langle \nabla f(C_l^t), C_l^{t+1} - C_l^t \rangle + (\tilde{L}/2)||C_l^{t+1} - C_l^t||_F^2
$$
Substituting $C_l^{t+1} = C_l^t - \eta \nabla f(C_l^t)$ we get: 
\begin{align}
f(C_l^{t+1}) &\le f(C_l^t) + \langle \nabla f(C_l^t), - \eta \nabla f(C_l^t) \rangle + (\tilde{L}/2) \eta^2 ||\nabla f(C_l^t)||_F^2\\
&= f(C_l^t) - \eta ||\nabla f(C_l^t)||_F^2 + (\tilde{L}/2) \eta^2 ||\nabla f(C_l^t)||_F^2
\end{align}
Progress per step is
$$
f(C_l^t) - f(C_l^{t+1}) \ge \eta ||\nabla f(C_l^t)||_F^2 (1 - (\tilde{L} \eta)/2)
$$
Summing from $t = 0$ to $T-1$: 
$$
\sum_{t=0}^{T-1} [f(C_l^t) - f(C_l^{t+1})] \ge \eta (1 - (\tilde{L} \eta)/2) \sum_{t=0}^{T-1} ||\nabla f(C_l^t)||_F^2
$$
The left side telescopes: 
$$
f(C_l^0) - f(C_l^T) \ge \eta (1 - (\tilde{L} \eta)/2) \sum_{t=0}^{T-1} ||\nabla f(C_l^t)||_F^2
$$
Since $f(C_l^T) \ge f^*$, we have: 
$$
f(C_l^0) - f^* \ge \eta (1 - (\tilde{L} \eta)/2) \sum_{t=0}^{T-1} ||\nabla f(C_l^t)||_F^2
$$
Dividing both sides by $T$ and rearranging: 
$$
(1/T) \sum_{t=0}^{T-1} ||\nabla f(C_l^t)||_F^2 \le (f(C_l^0) - f^*)/( \eta (1 - (\tilde{L} \eta)/2)T)
$$
For $\eta \le 1/\tilde{L}$, we have $1 - (\tilde{L} \eta)/2 \ge 1/2$, and so 
$$
(1/T) \sum_{t=0}^{T-1} ||\nabla f(C_l^t)||_F^2 \le 2(f(C_l^0) - f^*)/( \eta T).
$$
So, this descent reaches an 
$\epsilon$-first order stationary point.

\section{Image Classification with Vision Transformers}
\label{sec: vit exp appendix}
\textbf{Evaluation Framework:} We investigate the effectiveness of FrameFT for fine-tuning Vision Transformers. To this end, we fine-tune the base and large variants of the Vision Transformer (ViT) architecture \cite{dosovitskiy2021an}. We chose the ImageNet-21K pre-trained ViT models available on the Hugging Face Hub for our base model and evaluated across a diverse set of image classification challenges. The test suite includes fine-grained classification tasks -- Oxford Pets \cite{parkhi12a}, Stanford Cars, FGVC Aircraft \cite{maji13fine-grained}, CIFAR10, CIFAR100, texture recognition -- DTD \cite{cimpoi14describing}, and remote sensing applications -- EuroSAT \cite{helber2018introducing}, RESISC45 \cite{resisc}. 

\begin{wrapfigure}[12]{r}{0.6\textwidth}
    \centering
    \vspace{-20pt}
    \includegraphics[width=\linewidth]{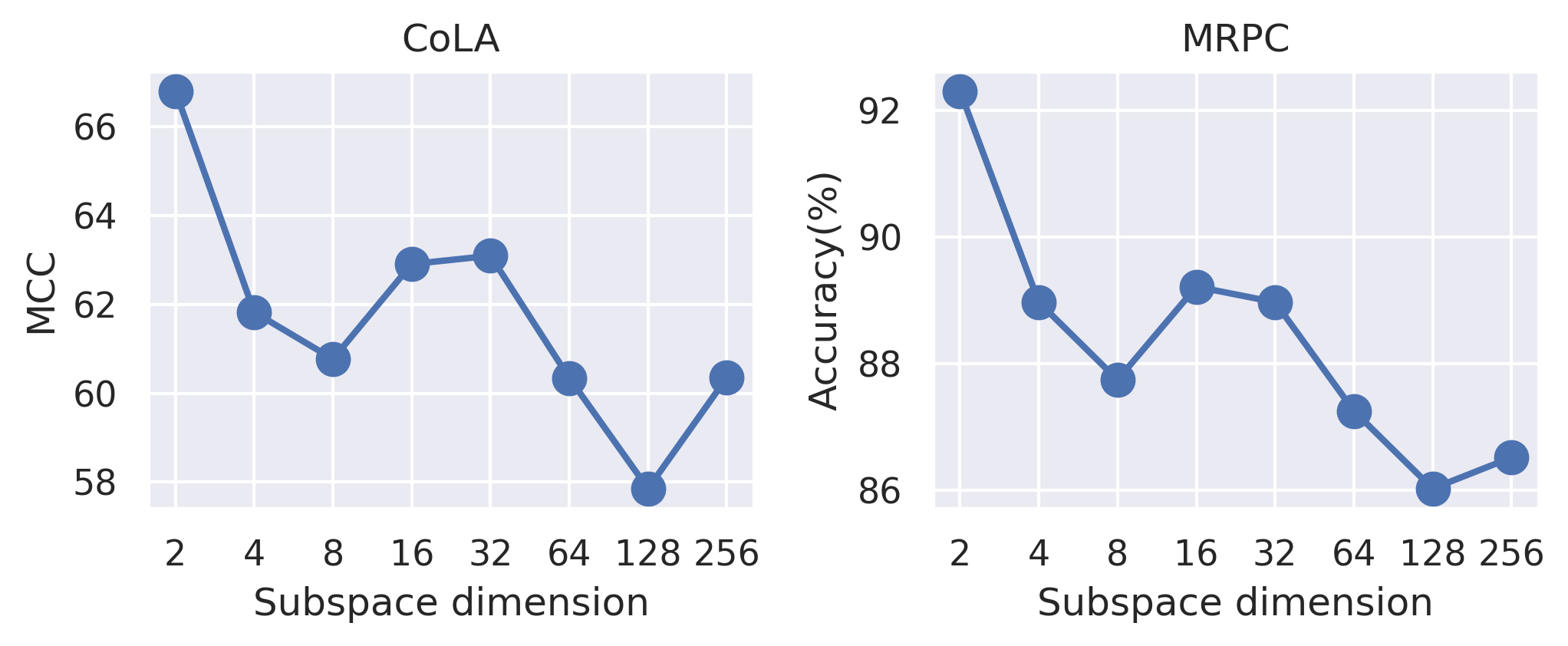}
    \vspace{-10pt}
    \caption{Performance of FrameFT on CoLA and MRPC for different values of subspace dimension $\rho$}
    \label{fig:cola vs l}
\end{wrapfigure}

\textbf{FrameFT configuration:} For this experiment, we utilized sparse block diagonal coefficient matrices with $n = 1000$ and $n = 5000$ non-zero elements, and a scaling factor $\alpha = 250$ across all the tasks. We share the positions of non-zero positions across all the layers. Similar to the natural language understanding experiment, we set $\rho=2$ while adjusting the number of subspaces $k$ to satisfy $k\rho = n$ throughout the network. Again, following LoRA, we adapt only the Query and Value matrices in the self-attention blocks in all layers of the model.

\textbf{Performance analysis:} Table \ref{tab:vit finetuning table appendix} shows the performance of FrameFT for fine-tuning Vision Transformers on diverse datasets. On average, with $5000$ non-zero coefficients per layer, FrameFT performs better than all the baseline methods except full fine-tuning. FrameFT is able to achieve this with the lowest number of parameters compared to all the baseline methods, which is $5-20\times$ lower than LoRA. We also see an improvement in performance as we increase the number of non-zero coefficients from $1000$ to $5000$. As an additional note, the hyperparameters for FrameFT are held fixed for all the tasks in this benchmark. The hyperparameters for FourierFT were adjusted on a task-by-task basis, as noted by \cite{fourierft}.

\begin{table*}[!bt]
\centering
\setlength{\tabcolsep}{0.8pt}
{\footnotesize
\caption{{\footnotesize Performance of different methods for Fine-Tuning ViT Base and Large models on different datasets. * indicates the results reported in previous work. We highlight non PEFT methods in \textcolor{gray}{gray}. FrameFT performs better than all the baseline methods on average, using a 30x lower number of parameters than LoRA.}}
\label{tab:vit finetuning table appendix}
\vspace{5pt}
\begin{tabular}{lllccccccccc}
\toprule
\textbf{~~~~}	& \textbf{Method}	& \textbf{Params}	& \textbf{Pets}	& \textbf{Cars}	& \textbf{CIFAR10}	& \textbf{DTD}		& \textbf{EuroSAT}	& \textbf{FGVC}		& \textbf{RESISC45}	& \textbf{CIFAR100}	& \textbf{Avg.} \\
\midrule
\multirow{7}{*}	
{\rotatebox{90}{\makebox[0pt][c]{\shortstack{ViT-B}}}}
& \textcolor{gray}{Full finetune*}	& \textcolor{gray}{$85.8$M} 	& \textcolor{gray}{$93.14$}	& \textcolor{gray}{$79.78$}	& \textcolor{gray}{$98.92$}	& \textcolor{gray}{$77.68$}	& \textcolor{gray}{$99.05$}	& \textcolor{gray}{$54.84$}	& \textcolor{gray}{$96.13$}	& \textcolor{gray}{$92.38$}	& \textcolor{gray}{$86.49$} \\
        & LinearProbe*	& $-$ 		& $90.28$	& $25.76$	& $96.41$	& $69.77$	& $88.72$	& $17.44$	& $74.22$	& $84.28$	& $68.36$ \\
	& LoRA*		& $581$K 	& $93.19$	& $45.38$	& $\bm{98.78}$	& $74.95$	& $98.44$	& $25.16$	& $92.70$	& $\bm{92.02}$	& $77.58$ \\
	& FourierFT	& $72$K 	& $93.21$	& $46.11$	& $98.58$	& $75.09$	& $98.29$	& $27.51$	& $91.97$	& $91.20$	& $77.75$ \\
	& FourierFT	& $239$K 	& $93.05$	& $56.36$	& $98.69$	& $77.30$	& $98.78$	& $32.44$	& $\bm{94.26}$	& $91.45$	& $80.29$ \\
	& \cellcolor{Gray}FrameFT (\emph{ours})	& \cellcolor{Gray}$24$K		& \cellcolor{Gray}$\bm{93.71}$	& \cellcolor{Gray}$70.84$	& \cellcolor{Gray}$98.50$	& \cellcolor{Gray}$77.80$	& \cellcolor{Gray}$98.52$	& \cellcolor{Gray}$44.45$	& \cellcolor{Gray}$92.30$	& \cellcolor{Gray}$90.80$	& \cellcolor{Gray}$83.36$ \\
	& \cellcolor{Gray}FrameFT (\emph{ours})	& \cellcolor{Gray}$120$K	& \cellcolor{Gray}$93.55$	& \cellcolor{Gray}$\bm{78.16}$	& \cellcolor{Gray}$98.50$	& \cellcolor{Gray}$\bm{79.96}$	& \cellcolor{Gray}$\bm{98.91}$	& \cellcolor{Gray}$\bm{52.35}$	& \cellcolor{Gray}$94.10$	& \cellcolor{Gray}$90.90$	& \cellcolor{Gray}$\bm{85.80}$ \\
\midrule
\multirow{7}{*}	
{\rotatebox{90}{\makebox[0pt][c]{\shortstack{ViT-L}}}}
& \textcolor{gray}{Full finetune*}	& \textcolor{gray}{$303.3$M} 	& \textcolor{gray}{$94.43$}	& \textcolor{gray}{$88.90$}	& \textcolor{gray}{$99.15$}	& \textcolor{gray}{$81.79$}	& \textcolor{gray}{$99.04$}	& \textcolor{gray}{$68.25$}	& \textcolor{gray}{$96.43$}	& \textcolor{gray}{$93.58$}	& \textcolor{gray}{$90.20$} \\
	& LinearProbe*	& $-$ 		& $91.11$	& $37.91$	& $97.78$	& $73.33$	& $92.64$	& $24.62$	& $82.02$	& $84.28$	& $72.96$ \\
	& LoRA* 		& $1.57$M 	& $\bm{94.82}$	& $73.25$	& $\bm{99.13}$	& $81.79$	& $98.63$	& $42.32$	& $94.71$	& $\bm{94.87}$	& $84.94$ \\
	& FourierFT 	& $144$K 	& $94.46$	& $69.56$	& $99.10$	& $80.83$	& $98.65$	& $39.92$	& $93.86$	& $93.31$	& $83.71$ \\
	& FourierFT 	& $480$K 	& $94.84$	& $79.14$	& $99.08$	& $\bm{81.88}$	& $98.66$	& $51.28$	& $\bm{95.20}$	& $93.37$	& $86.68$ \\
	& \cellcolor{Gray}FrameFT (\emph{ours})	& \cellcolor{Gray}$48$K		& \cellcolor{Gray}$94.27$	& \cellcolor{Gray}$78.48$	& \cellcolor{Gray}$99.02$	& \cellcolor{Gray}$79.94$	& \cellcolor{Gray}$98.49$	& \cellcolor{Gray}$52.63$	& \cellcolor{Gray}$93.72$	& \cellcolor{Gray}$92.68$	& \cellcolor{Gray}$86.15$ \\
	& \cellcolor{Gray}FrameFT (\emph{ours})	& \cellcolor{Gray}$240$K	& \cellcolor{Gray}$94.20$	& \cellcolor{Gray}$\bm{82.83}$	& \cellcolor{Gray}$98.90$	& \cellcolor{Gray}$81.54$	& \cellcolor{Gray}$\bm{98.87}$	& \cellcolor{Gray}$\bm{59.63}$	& \cellcolor{Gray}$94.96$	& \cellcolor{Gray}$92.71$	& \cellcolor{Gray}$\bm{87.95}$ \\

\bottomrule
\end{tabular}
}
\vspace{-10pt}
\end{table*}

\section{Performance versus subspace dimension}
\label{sec: Performance versus subspace dimension}

In this experiment, we measure the performance of FrameFT for different values of subspace dimension, $\rho$. For this purpose, we fine-tune the RoBERTa-base model on two of the tasks in the GLUE benchmark - CoLA and MRPC, for different values of $\rho$. We keep the hyperparameters that we obtain for $\rho=2$ as shown in Table \ref{tab: Hyperparameters for the GLUE benchmark}. Figure \ref{fig:cola vs l} shows the performance of FrameFT as the subspace dimension is varied from $2$ to $256$. We observe that FrameFT performs well even with a subspace dimension of $2$. 
We do not see a strong relationship between the performance and the subspace dimension. Given that the two key choices we have for adjustments -- the number of coefficients and the subspace dimension -- to fill the space, the number of coefficients is a more consistent way of improving performance as indicated in Figure \ref{fig: num coeffs sweep qnli, rte}.
An alternative and promising direction for future work is to make the subspace dimension a trainable parameter, potentially enabling more adaptive and efficient optimization.

\section{Performance as a function of number of non-zero coefficients}
\label{sec: glue vs n appendix}
Figure \ref{fig: num coeffs sweep qnli, rte} shows the performance of RoBERTa base fine-tuned with FrameFT on two of the tasks in the GLUE benchmark, in addition to the tasks shown in the main text. As noted earlier, we observe that FrameFT improves as the number of non-zero coefficients are increased, performing on par or better than full-finetuning.

\begin{figure}[tb!]
    \centering
    \includegraphics[width=0.8\linewidth]{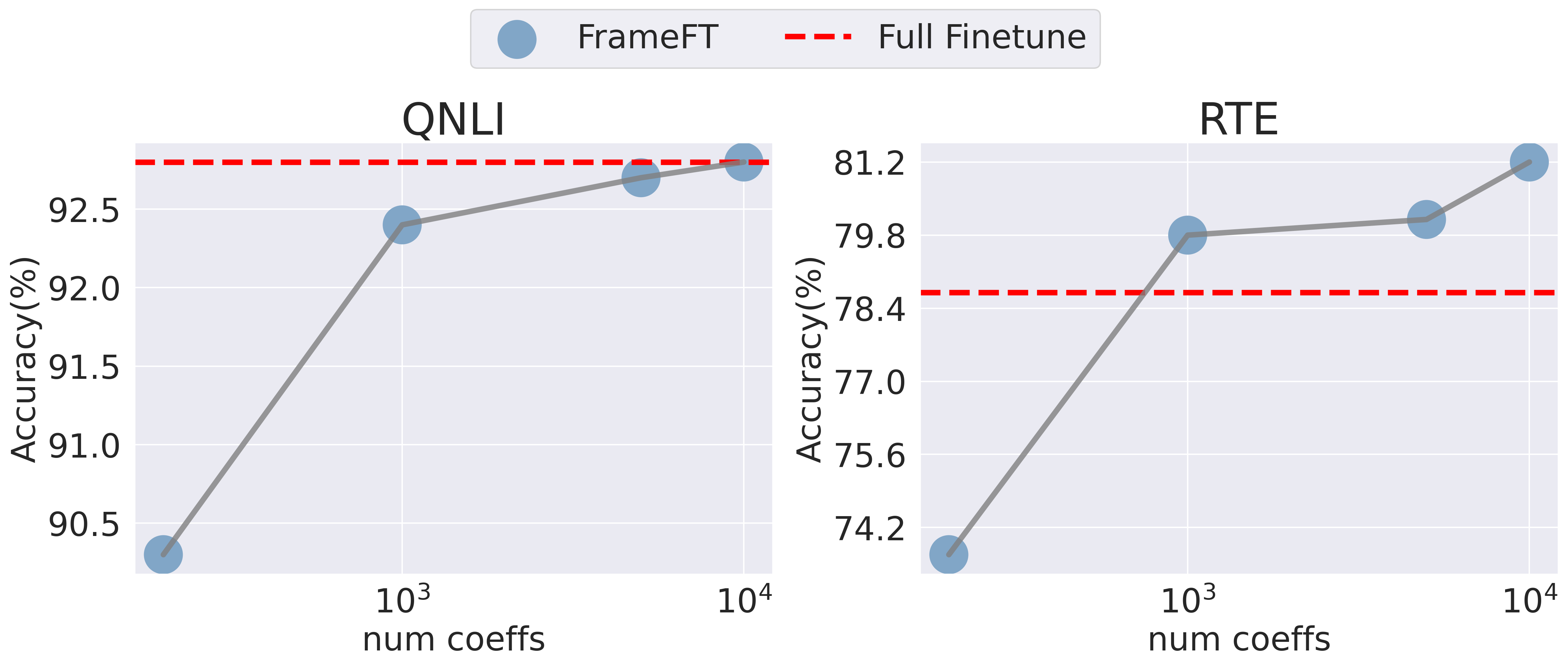}
    \vspace{-5pt}
 \caption{{\footnotesize Performance of RoBERTa base model fine-tuned with FrameFT versus the number of non-zero coefficients on the GLUE benchmark. The red line indicates the performance of full fine-tuning.}} 
 \vspace{-10pt}
  \label{fig: num coeffs sweep qnli, rte}
\end{figure}

\section{Additional Discussions}
\label{sec:discussions}

We cover a few relevant points not discussed in detail so far.
\begin{enumerate}[\bfseries (a)]
\item \emph{Comparison with Prompt Tuning.} Prior literature has extensively analyzed the trade-offs between prompt tuning and Parameter-Efficient Fine-Tuning (PEFT) methods \cite{wistuba2024choicepefttechniquecontinual, pu2023empiricalanalysisstrengthsweaknesses}. Notably, \cite{pu2023empiricalanalysisstrengthsweaknesses} provides a quantitative comparison of prompt tuning against methods like LoRA and $(IA)^3$, reporting that PEFT strategies generally yield superior performance. Since FrameFT demonstrates performance comparable to or exceeding that of LoRA, we infer that it maintains this advantage over prompt tuning. Hence, FrameFT serves as a more robust alternative to prompt tuning, delivering the reliability of PEFT methods while requiring a very low number of parameters.
\item \emph{Standard Frames versus Fusion Frames.} A natural question is whether standard Frames (characterized by a subspace dimension of $1$) could be employed instead of Fusion Frames. While feasible in principle, the specific construction algorithm we outline in \S\ref{sec:tff_construction} enforces a minimum subspace dimension of $2$. This constraint arises because we first generate the frame in the Complex domain; when transforming these components to the Real domain, the dimension inherently doubles. Consequently, if an application strictly requires a subspace dimension of $1$ (while maintaining $k\rho=n$), the complexity of fusion frames is unnecessary. In such scalar cases, one can simply rely on classical orthonormal bases, such as Fourier bases or Wavelets.
\item \emph{How does performance vary as a function of subspace dimension?} We performed experiments changing the subspace dimension and report the results in \S\ref{sec: Performance versus subspace dimension}. We observe that as the subspace dimension is increased (with $k\rho=n$), the sparsity in Fusion Frames increases, thereby reducing the degrees of freedom for the parameter update. This leads to a small drop in performance. 
\end{enumerate}

\section{Hyperparameters used for the experiments}
\label{sec: Hyperparameters used for the experiments}
We present the hyperparameters used for different experiments in Tables \ref{tab: Hyperparameters for the GLUE benchmark}, \ref{tab: Hyperparameters for the image classification task}, \ref{tab: Hyperparameters for Instruction Tuning}.
As for the baselines, we use the hyperparameters suggested by the respective authors.

\begin{table} [tb!]
\centering
\begin{tabular}{l|l}
    \toprule
    Optimizer & AdamW \\
    LR scheduler  &  Linear schedule with warmup \\
    Batch size & 32 \\
    Head learning rate & 3E-3 \\
    Adapter learning rate & 0.12 \\
    Warmup ratio & 0.06 \\
    Max. Seq. length & 512 \\
    $\alpha$ & 20.0 \\
    \bottomrule
\end{tabular}
\vspace{5pt}
\caption{Hyperparameters for the GLUE benchmark}
\label{tab: Hyperparameters for the GLUE benchmark}
\vspace{-5pt}
\end{table}

\begin{table} [tb!]
\centering
\begin{tabular}{l|l}
    \toprule
    Optimizer & AdamW \\
    LR scheduler  &  Linear schedule with warmup \\
    Batch size & 50 \\
    Head learning rate & 3E-2 \\
    Adapter learning rate & 0.33 \\
    Warmup ratio & 0.06 \\
    $\alpha$ & 250.0 \\
    \bottomrule
\end{tabular}
\vspace{5pt}
\caption{Hyperparameters for the Image classification task}
\label{tab: Hyperparameters for the image classification task}
\vspace{-5pt}
\end{table}

\begin{table} [tb!]
\centering
\begin{tabular}{l|l}
    \toprule
    Optimizer & AdamW \\
    LR scheduler  &  Linear schedule with warmup \\
    Batch size & 128 \\
    Learning rate & 0.1 \\
    Warmup ratio & 0.03 \\
    $\alpha$ & 200.0 \\
    \bottomrule
\end{tabular}
\vspace{5pt}
\caption{Hyperparameters for the Instruction tuning task}
\label{tab: Hyperparameters for Instruction Tuning}
\vspace{-10pt}
\end{table}

\section{Fusion Frames Construction: an example}
\label{sec: example of fusion frames}
In this section, we briefly outline an algorithm for generating Tight Fusion Frames with the help of an example. \cite{CASAZZA2011175} formalized a systematic framework for identifying the $(k, \rho, d)$ values for which a Tight Fusion Frame exists and generating the TFF whenever it exists. Their algorithm generates a TFF in $\mathbb{C}^d$. Since we are mainly interested in real vector spaces, we use the simple extension in \cite{fickus2023harmonic} to adapt the TFF to the real domain. The overall algorithm can be divided into three parts.
\begin{compactenum}
    \item Construct a UNTF for $\mathbb{C}^\rho$ with $d$ elements by playing Spectral Tetris.
    \item Modulate these vectors with $k^{\text{th}}$ roots of unity to form $k$ subspaces of dimension $\rho$ in $\mathbb{C}^d$.
    \item Use the method in \cite{fickus2023harmonic} to extend the result to real-valued spaces.
\end{compactenum}
We describe these steps in detail by walking through the construction of a (6,3,11) TFF, a Tight Fusion Frame spanning $\mathbb{C}^{11}$ with $k=6$ subspaces where each subspace has a dimension of $\rho = 3$.
\subsection{Spectral Tetris}
The first step is to generate a ``smaller'' frame and in the next step, we modulate the smaller frame to generate a ``larger'' Tight Fusion Frame. After generating a TFF for $\mathbb{C}^d$ we can easily extend it to the Real Field by applying the entrywise map $x + iy \mapsto \begin{bmatrix}
    x & -y \\
    y & x
\end{bmatrix}$. So, $k=6, \rho=3, d=11$. 
As the name suggests UNTFs are Tight frames where each frame vector has a unit norm. We construct a $4\times 11$ matrix $F$ whose columns are the frame vectors for $\mathbb{C}^4$ which satisfies

In this first step, we generate a ``smaller'' frame - a Unit Norm Tight Frame (UNTF) for $\mathbb{C}^3$ with $11$ vectors. We arrange these vectors in the columns of a matrix $F$. This UNTF is characterized by
\begin{compactitem}
    \item Columns with unit norm 
    \item Rows are Orthogonal and have a constant norm, that is $FF^*$ is a constant multiple of the Identity matrix ( here the constant being $\frac{11}{3}$)
\end{compactitem}
We start by filling the first two entries in $F$ with 1 \\
$$F = \begin{bmatrix}
    1 & 1 & ? & ? & ? & ? & ? & ? & ? & ? & ? \\
    ? & ? & ? & ? & ? & ? & ? & ? & ? & ? & ? \\
    ? & ? & ? & ? & ? & ? & ? & ? & ? & ? & ? \\
\end{bmatrix}$$\\
The remaining norm left to be filled is $\frac{11}{3}-2 = \frac{5}{3}$. We continue to fill in 1s until the required norm is less than 1. Here, we can do this only once, yielding
$$F = \begin{bmatrix}
    1 & 1 & 1 & ? & ? & ? & ? & ? & ? & ? & ? \\
    ? & ? & ? & ? & ? & ? & ? & ? & ? & ? & ? \\
    ? & ? & ? & ? & ? & ? & ? & ? & ? & ? & ? \\
\end{bmatrix}$$\\
This leaves a norm of $\frac{2}{3}$ to be filled. This can be added with a $2\times 2$ matrix $T(x)$. $T(x)$ here is defined as follows:
\begin{equation*}
    T(x) \vcentcolon  = \frac{1}{\sqrt{2}} \begin{bmatrix}
    \sqrt{x} & \sqrt{x} \\
    \sqrt{2-x} & -\sqrt{2-x}
\end{bmatrix}, \quad \quad T(x) T^*(x) = \begin{bmatrix}
    x & 0 \\
    0 & 2-x
\end{bmatrix}
\end{equation*}
After substituting $T(x)$ with $x = \frac{2}{3}$, $F$ is now
$$F = \begin{bmatrix}
    1 & 1 & 1 & \frac{1}{\sqrt{3}} & \frac{1}{\sqrt{3}} & 0 & 0 & 0 & 0 & 0 & 0 \\
    0 & 0 & 0 & \frac{\sqrt{2}}{\sqrt{3}} & -\frac{\sqrt{2}}{\sqrt{3}} & ? & ? & ? & ? & ? & ? \\
    0 & 0 & 0 & 0 & ? & ? & ? & ? & ? & ? & ? \\
\end{bmatrix}$$\\
Now, we continue adding ones in row two until the norm becomes less than 1 again.
$$F = \begin{bmatrix}
    1 & 1 & 1 & \frac{1}{\sqrt{3}} & \frac{1}{\sqrt{3}} & 0 & 0 & 0 & 0 & 0 & 0 \\
    0 & 0 & 0 & \frac{\sqrt{2}}{\sqrt{3}} & -\frac{\sqrt{2}}{\sqrt{3}} & 1 & 1 & ? & ? & ? & ? \\
    0 & 0 & 0 & 0 & ? & ? & ? & ? & ? & ? & ? \\
\end{bmatrix}$$\\
Now we insert $T(x)$ with the remaining norm. We repeat this process until all the rows are filled. The Final $F$ is given by
$$F = \begin{bmatrix}
    1 & 1 & 1 & \frac{1}{\sqrt{3}} & \frac{1}{\sqrt{3}} & 0 & 0 & 0 & 0 & 0 & 0 \\
    0 & 0 & 0 & \frac{\sqrt{2}}{\sqrt{3}} & -\frac{\sqrt{2}}{\sqrt{3}} & 1 & 1 & \frac{1}{\sqrt{6}} & \frac{1}{\sqrt{6}} & 0 & 0 \\
    0 & 0 & 0 & 0 & 0 & 0 & 0 & \frac{5}{\sqrt{6}} & -\frac{5}{\sqrt{6}} & 1 & 1 \\
\end{bmatrix}$$\\

\subsection{Modulation}
In the second step of TFF construction, the $F$ matrix is modulated with complex roots of unity, one subspace at a time. For each $k_i = 0,1,2,\dots k-1$, we construct a row vector 
$$w_{k_i} = \left[\left( e^{\frac{i2\pi k_i}{k}} \right) ^0 \left( e^{\frac{i2\pi k_i}{k}} \right) ^1 \left( e^{\frac{i2\pi k_i}{k}} \right) ^2 \dots \left( e^{\frac{i2\pi k_i}{k}} \right) ^{d-1}\right]$$
Each row of $F$ is multiplied by $w_{k_i}$ to produce the orthogonal basis for the subspace indexed by $k_i$. Theorem $14$ by \citet{CASAZZA2011175} proves that the Fusion Frames generated by this algorithm are Tight. The Final Fusion Frame generated is shown in Table \ref{tab:fusion frames constructed}.

\begin{table}[!bt]
    \centering
\begin{equation*}
\setlength{\arraycolsep}{0.9em}
\begin{bmatrix*}[l]
    1 & 1 & 1 & \frac{1}{\sqrt{3}} & \frac{1}{\sqrt{3}} & 0 & 0 & 0 & 0 & 0 & 0 \\
    1 & \omega & \omega^2 & \frac{\sqrt{1}}{\sqrt{3}} \omega^3 & \frac{\sqrt{1}}{\sqrt{3}} \omega^4 & 0 & 0 & 0 & 0 & 0 & 0 \\
    1 & \omega^2 & \omega^4 & \frac{\sqrt{1}}{\sqrt{3}} & \frac{\sqrt{1}}{\sqrt{3}} \omega^2 & 0 & 0 & 0 & 0 & 0 & 0 \\
    1 & \omega^3 & 1 & \frac{\sqrt{1}}{\sqrt{3}} \omega^3 & \frac{\sqrt{1}}{\sqrt{3}} & 0 & 0 & 0 & 0 & 0 & 0 \\
    1 & \omega^4 & \omega^2 & \frac{\sqrt{1}}{\sqrt{3}} & \frac{\sqrt{1}}{\sqrt{3}} \omega^4 & 0 & 0 & 0 & 0 & 0 & 0 \\
    1 & \omega^5 & \omega^4 & \frac{\sqrt{1}}{\sqrt{3}} \omega^3 & \frac{\sqrt{1}}{\sqrt{3}} \omega^2 & 0 & 0 & 0 & 0 & 0 & 0 \\
    0 & 0 & 0 & \frac{\sqrt{2}}{\sqrt{3}} & -\frac{\sqrt{2}}{\sqrt{3}} & 1 & 1 & \frac{1}{\sqrt{6}} & \frac{1}{\sqrt{6}} & 0 & 0 \\
    0 & 0 & 0 & \frac{\sqrt{2}}{\sqrt{3}} \omega^3 & -\frac{\sqrt{2}}{\sqrt{3}} \omega^4 & \omega^5 & 1 & \frac{1}{\sqrt{6}} \omega & \frac{1}{\sqrt{6}} \omega^2 & 0 & 0 \\
    0 & 0 & 0 & \frac{\sqrt{2}}{\sqrt{3}} & -\frac{\sqrt{2}}{\sqrt{3}} \omega^2 & \omega^4 & 1 & \frac{1}{\sqrt{6}} \omega^2 & \frac{1}{\sqrt{6}} \omega^4 & 0 & 0 \\
    0 & 0 & 0 & \frac{\sqrt{2}}{\sqrt{3}} \omega^3 & -\frac{\sqrt{2}}{\sqrt{3}} & \omega^3 & 1 & \frac{1}{\sqrt{6}} \omega^3 & \frac{1}{\sqrt{6}} & 0 & 0 \\
    0 & 0 & 0 & \frac{\sqrt{2}}{\sqrt{3}} & -\frac{\sqrt{2}}{\sqrt{3}} \omega^4 & \omega^2 & 1 & \frac{1}{\sqrt{6}} \omega^4 & \frac{1}{\sqrt{6}} \omega^2 & 0 & 0 \\
    0 & 0 & 0 & \frac{\sqrt{2}}{\sqrt{3}} \omega^3 & -\frac{\sqrt{2}}{\sqrt{3}} \omega^2 & \omega^1 & 1 & \frac{1}{\sqrt{6}} \omega^5 & \frac{1}{\sqrt{6}} \omega^4 & 0 & 0 \\

    0 & 0 & 0 & 0 & 0 & 0 & 0 & \frac{5}{\sqrt{6}} & -\frac{5}{\sqrt{6}} & 1 & 1 \\
    0 & 0 & 0 & 0 & 0 & 0 & 0 & \frac{5}{\sqrt{6}} \omega & -\frac{5}{\sqrt{6}} \omega^2 & \omega^3 & \omega^4 \\
    0 & 0 & 0 & 0 & 0 & 0 & 0 & \frac{5}{\sqrt{6}} \omega^2 & -\frac{5}{\sqrt{6}} \omega^4 & 1 & \omega^2 \\
    0 & 0 & 0 & 0 & 0 & 0 & 0 & \frac{5}{\sqrt{6}} \omega^3 & -\frac{5}{\sqrt{6}} & \omega^3 & 1 \\
    0 & 0 & 0 & 0 & 0 & 0 & 0 & \frac{5}{\sqrt{6}} \omega^4 & -\frac{5}{\sqrt{6}} \omega^2 & 1 & \omega^4 \\
    0 & 0 & 0 & 0 & 0 & 0 & 0 & \frac{5}{\sqrt{6}} \omega^5 & -\frac{5}{\sqrt{6}} \omega^4 & \omega^3 & \omega^2 \\
    \end{bmatrix*}
\end{equation*}
    \caption{\textbf{$\bm{(6,3,11)}$-TFF} for $\mathbb{C}^{11}$. Here, $\omega = e^{i\pi/3}$. A pair of rows belongs to the same subspace if their indices differ by a multiple of 6}
    \label{tab:fusion frames constructed}
\end{table}

\subsection{Fusion Frames construction time}
\label{sec: Fusion Frames construction time}

Table \ref{tab:basis generation time}, presents the time taken to construct the Fusion Frames using the Spectral Tetris method.
As a reference, we provide the time taken to construct a random orthonormal basis using two methods, namely the Gram-Schmidt algorithm and QR decomposition on a random matrix. 
These times are measured on an NVIDIA A100 machine.
Since the Gram-Schmidt process is sequential, it cannot be parallelized on GPUs/TPUs. Hence, the time taken to perform this process is quite high.
We can see that Spectral Tetris takes a fraction of the time to construct than QR decomposition on modern GPUs.
Though there is some latency in generating Fusion Frames, it is a minor issue, as it is amortized across multiple model calls.

\begin{table*}[!tb]
\centering
\renewcommand{\arraystretch}{1.2}
\setlength{\tabcolsep}{2.25pt}
{\footnotesize
\caption{{\footnotesize Basis generation time. Spectral Tetris takes a fraction of time to construct the Basis on an NVDIA A100 gpu}}
\label{tab:basis generation time}
\vspace{5pt}
\begin{tabular}{clllllllll}
\toprule
\textbf{Model} & \textbf{Method} & \textbf{Bases construction time (s)} \\
\midrule
\multirow{3}{*}{Llama-2-7B} & Gram-Schmidt & 112.1 \\
& QR decomposition & 0.31 \\
& Spectral Tetris & 0.09 \\
\midrule
\multirow{3}{*}{Gemma-2-9B} & Gram-Schmidt & 228.8 \\
& QR decomposition & 0.64 \\
& Spectral Tetris & 0.21 \\
\midrule
\multirow{3}{*}{Llama-3.1-8B} & Gram-Schmidt & 119.6 \\
& QR decomposition & 0.53 \\
& Spectral Tetris & 0.11 \\
\bottomrule
\end{tabular}
}
\end{table*}

\section{Scaling Factor Sensitivity}
  \label{sec: alpha sensitivity}
  Table~\ref{tab:alpha sensitivity} reports performance of Llama-2-7B instruction-tuned on the Alpaca dataset with FrameFT as the scaling factor $\alpha$ is varied from 10 to 600.
  FrameFT outperforms LoRA's average of 63.18 across the entire sweep. For
  $\alpha \geq 100$, it consistently matches or exceeds the full fine-tuning upper bound of 63.39. 
  The results demonstrate a major practical advantage: FrameFT is highly insensitive to this factor. Unlike PEFT methods requiring exhaustive searches to prevent instability, FrameFT operates reliably out-of-the-box.

  \begin{table}[tb!]
  \centering
  \renewcommand{\arraystretch}{1.2}
  \setlength{\tabcolsep}{2pt}
  {\footnotesize
  \caption{{\footnotesize Scaling factor $\alpha$ sensitivity on Llama-2-7B/Alpaca. FrameFT outperforms LoRA (63.18) across the full sweep.}}
  \label{tab:alpha sensitivity}
  \vspace{5pt}
  \begin{tabular}{lcccccccccc}
  \toprule
  $\alpha$ & \textbf{ARC-c} & \textbf{ARC-e} & \textbf{BoolQ} & \textbf{HellaSwag} & \textbf{OBQA} & \textbf{PIQA} & \textbf{RTE} & \textbf{WinoGrande} & \textbf{Avg.} \\
  \midrule
  10  & 45.9  & 77.4  & 78.75 & 58.01 & 34.0 & 78.67 & 61.01 & 70.0  & 62.97 \\
  100 & 45.47 & 77.1  & 78.53 & 58.28 & 34.4 & 78.78 & 63.89 & 70.71 & 63.40 \\
  200 (default) & 45.22 & 76.93 & 78.62 & 58.08 & 34.2 & 78.62 & 66.06 & 71.19 & 63.62 \\
  400 & 45.47 & 77.31 & 79.2  & 58.32 & 34.0 & 78.62 & 62.45 & 70.32 & 63.21 \\
  600 & 45.56 & 77.02 & 78.99 & 58.31 & 34.6 & 78.73 & 62.82 & 70.56 & 63.32 \\
  \midrule
  LoRA & 45.82 & 77.02 & 78.81 & 58.08 & 35.2 & 78.83 & 61.01 & 70.72 & 63.18 \\
  Full FT & 47.52 & 77.73 & 78.96 & 58.99 & 33.6 & 78.61 & 62.09 & 69.61 & 63.39 \\
  \bottomrule
  \end{tabular}
  }
  \end{table}


\end{document}